\author[a]{Lucas De Lara\thanks{E-mail: lucas.de-lara@univ-lorraine.fr}}
\author[a]{Mathis Deronzier}
\author[b]{Alberto Gonz\'alez-Sanz}%\thanks{E-mail: ag4855@columbia.edu}
\author[a]{Virgile Foy}
\affil[a]{Institut de Mathématiques de Toulouse, Université Paul Sabatier}
\affil[b]{Department of Statistics, Columbia University}
\date{}
\title{On the Nonconvexity of Push-Forward Constraints and Its Consequences in Machine Learning\footnote{Published in SIMODS: \url{https://epubs.siam.org/doi/abs/10.1137/24M1645036}}}
\begin{document}

\maketitle

% REQUIRED
\begin{abstract}
The push-forward operation enables one to redistribute a probability measure through a deterministic map. It plays a key role in statistics and optimization: many learning problems (notably from optimal transport, generative modeling, and algorithmic fairness) include constraints or penalties framed as push-forward conditions on the model. However, the literature lacks general theoretical insights on the (non)convexity of such constraints and its consequences on the associated learning problems. This paper aims at filling this gap. In the first part, we provide a range of sufficient and necessary conditions for the (non)convexity of two sets of functions: the maps transporting one probability measure to another and the maps inducing equal output distributions across distinct probability measures. This highlights that for most probability measures, these push-forward constraints are not convex. In the second part, we show how this result implies critical limitations on the design of convex optimization problems for learning generative models or groupwise fair predictors. This work will hopefully help researchers and practitioners have a better understanding of the critical impact of push-forward conditions onto convexity.
\end{abstract}

\textbf{Keywords:} push-forward, machine learning, convexity, optimal transport, generative modeling, fairness

\section{Introduction}
Most penalties promoting group-level fairness in machine learning are nonconvex. Analogously, generative-modeling optimization problems are almost never convex even in function space. This article provides a common mathematical explanation: \emph{push-forward constraints are generally nonconvex, and no convex loss can quantify the deviation to a nonconvex subset.}

Given a Borel probability measure $P$ on $\R^d$ and a measurable function $f : \R^d \to \R^p$, the \emph{push-forward measure} of $P$ by $f$ is defined as $f_\sharp P := P \circ f^{-1}$. This operation describes the redistribution of the mass from $P$ through the deterministic allocation $f$ and plays an increasingly important role in statistics and machine learning. Notably, both optimal transport \citep{monge1781memoire} and generative modeling \citep{goodfellow2014generative, kingma2014auto, rezende2015variational} address the computation of a map $f$ satisfying $f_\sharp P = Q$ for two probability measures $P$ and $Q$. Additionally, many concepts of groupwise algorithmic fairness can be framed as finding a predictor $f$ such that $f_\sharp P = f_\sharp Q$ \citep{dwork2012fairness, hardt2016equality}. However, while convexity is critical to design statistically and numerically sound learning problems \citep{hjort1993asymptotics,bubeck2015convex}, the literature has little analyzed the (non)convexity of such constraints and its consequences in machine learning.

Our paper aims at filling this gap. We refer to functions $f$ such that $f_\sharp P = Q$ as \emph{transport maps} between $P$ and $Q$, and to functions $f$ such that $f_\sharp P = f_\sharp Q$ as \emph{equalizing maps} between $P$ and $Q$. In a first time, we thoroughly study these sets of functions, notably proving that they are most often not convex. In a second time, we address the practical relevance of the first part to understand the limitations of popular machine-learning tasks. Our reasoning rests on an overlooked result from convex analysis: there is no convex loss quantifying the deviation from a nonconvex constraint. We show how this generally renders unfeasible the design of convex learning problems involving push-forward constraints or penalties on the model, such as generative modeling and fair learning. 

As such, this theoretical work has a practical interest. While the first part can be seen as a stand-alone mathematical contribution that sheds a fresh light on the push-forward operation, it crucially provides guidance on what can(not) be achieved in generative modeling and algorithmic fairness through the second part. Concretely, we hope that this paper will not only provide a better understanding on measure transportation, but also save time for researchers and practitioners struggling to design convex learning problems.\footnote{We emphasize that the motivation for this work comes precisely from failed attempts on our side to find a convex penalty for statistical parity. This led us to identify specific cases where this was impossible and then to develop a general interpretation of this phenomenon.}

\paragraph{Outline} The rest of the paper is organized as follows.
\begin{itemize}
    \item \Cref{sec:prelim} furnishes the necessary background on the push-forward operation.
    \item \Cref{sec:shape} elucidates the convexity of the sets of transport maps (\cref{sec:transport}) and equalizing maps (\cref{sec:equalizer}) by proving that they are generally not convex.
    \item \Cref{sec:learning} first reminds that a convex loss is minimal on a convex set, and details the consequences of this result on the design of convex minimization programs (\cref{sec:convexity}). Then, it applies this framework to explain why the machine-learning problems for generative modeling (\cref{sec:generative}) and group fairness (\cref{sec:parity}), which involve (generally nonconvex) push-forward constraints, cannot be convex.
    \item \Cref{sec:recover} proposes two directions to recover convexity in such machine-learning tasks: weakening or strengthening the constraint (\cref{sec:changing}), substituting the deterministic push-forward map by a random coupling (\cref{sec:model}).
    \item  \Cref{sec:stump} focuses on equalizing maps between discrete measures, which requires specific notations.
\end{itemize}

\section{Preliminaries}\label{sec:prelim}

This preliminary section introduces the basic notation and definitions that will be used throughout the paper and provides basic knowledge on the push-forward operation.

\subsection{Notations and definitions}

Let $d,p \geq 1$ be two integers, and let $\G$ be the most general class of functions we consider in this work: the class of Borel measurable functions from $\R^d$ to $\R^p$. The other key objects are the measures on Euclidean spaces. We denote by $\norm{\cdot}$ the Euclidean norm regardless of the dimension. We refer to $\M(\R^d)$ and $\M^+(\R^d)$ as, respectively, the set of Borel measures on $\R^d$ and the set of nonnegative Borel measures on $\R^d$. Additionally, we define $\M_s(\R^d) := \{ \mu \in \M(\R^d) \mid \mu(\R^d)=s \}$ for $s \in \R$, and $\M^+_s(\R^d) := \{ \mu \in \M^+(\R^d) \mid \mu(\R^d)=s \}$ for $s \geq 0$. As such, $\mathcal{P}(\R^d) := \M^+_1(\R^d)$ is the set of probability measures on $\R^d$. For $f \in \G$, and $\mu \in \M(\R^d)$, we call $f_\sharp \mu := \mu \circ f^{-1}$ the \emph{push-forward measure} of $\mu$ by $f$. If $P \in \mathcal{P}(\R^d)$, note that $f_\sharp P$ is simply the probability law of the random variable $f(X)$ when the law of the random variable $X$ is $P$.

We denote by $\delta_x$ the \emph{Dirac measure} at a given point $x$, and by $\ell_d$ the Lebesgue measure on $\R^d$. A measure $\mu \in \M(\R^d)$ is \emph{absolutely continuous} with respect to a measure $\nu \in \M(\R^d)$, written as $\mu \ll \nu$, if for any Borel set $E \subseteq \R^d$, $(\nu(E)=0 \implies \mu(E)=0)$. We say that $\mu$ is \emph{continuous} if for any $x \in \R^d$, $\mu(\{x\})=0$. Two measures $\mu,\nu \in \M(\R^d)$ are \emph{singular} if there exists a partition $\{A,B\}$ of $\R^d$ such that $\mu(B')=\nu(A')=0$ for all Borel sets $A'\subseteq A$  and $B'\subseteq B$. Every $P \in \mathcal{P}(\R^d)$ can be written as $P = P_c + P_{\delta}$, where $P_c$ and $P_{\delta}$ are two singular measures of $\M^+(\R^d)$, such that $P_c$ is continuous and $P_{\delta}$ is a discrete (or pure point) measure.

For $P \in \mathcal{P}(\R^d)$, two functions $f,g \in \G$ are \emph{$P$-almost-everywhere equal} if $P(\{x \in \R^d \mid f(x)=g(x)\})=1$, which we write as $f \aeeq{P} g$. Then, for any $f \in \G$ and $P \in \mathcal{P}(\R^d)$, we define $\{f\}_P := \{ g \in \G \mid f \aeeq{P} g\}$. It describes a family of functions that cannot be distinguished by $P$. Throughout, we also consider a probability space $(\Omega,\Sigma,\P)$ that serves to define random variables. The \emph{law} with respect to $\P$ of any random variable or vector $X$ defined on $\Omega$ is denoted by $\mathbb{L}(X) := X_\sharp \P$, while its \emph{expectation} is denoted by $\E[X] := \int_\Omega X(\omega) \mathrm{d}\P(\omega)$. Additionally, whenever it is well-defined, $\mathbb{L}(X \mid E)$ refers to the \emph{law of $X$ conditional to $E \in \Sigma$}.

A crucial concept for the second part of the paper is \emph{convexity}. A \emph{set} $\C$ is convex if for any $u,v \in \C$ and $0<t<1$ the element $(1-t)u + tv$ belongs to $\C$. A real-valued \emph{function} $\L$ defined on a convex set $\C$ is convex if for any $u,v \in \C$ and $0<t<1$, $\L((1-t)u + tv) \leq (1-t)\L(u) + t\L(v)$. Note that for any $f \in \G$ and $P \in \mathcal{P}(\R^d)$, $\{f\}_P$ is convex.

\subsection{Push-forward calculus}

We formalize a series of elementary calculus rules for the push-forward operator that will be frequently used in the proofs of our main results. They directly follow from the definition of the push-forward measure.

\begin{proposition}[basic push-forward calculus]
Let $f,g \in \G$, $\mu,\nu \in \M(\R^d)$, and $s \in \R$. The following properties hold:
\begin{itemize}
    \item[(i)] the function $f_\sharp :\M(\R^d) \to \M(\R^p) $ is a linear map;
    \item[(ii)] $(\mu \in \M_s(\R^d) \implies f_\sharp \mu \in \M_s(\R^p))$ and $(\mu \in \M^+_{\abs{s}}(\R^d) \implies f_\sharp \mu \in \M^+_{\abs{s}}(\R^p))$;
    \item[(iii)] $\int_{\R^p} h \, \mathrm{d}(f_\sharp \mu) = \int_{\R^d} (h \circ f) \, \mathrm{d}\mu$ for every measurable function $h : \R^p \to \R$;
    \item[(iv)] $\psi_\sharp(f_\sharp \mu) = (\psi \circ f)_\sharp \mu$ for every measurable function $\psi: \R^p \to \R^k$, where $k \geq 1$ is an integer;
    \item[(v)] if $\mu \in \mathcal{P}(\R^d)$, then $(g \aeeq{\mu} f \implies g_\sharp \mu = f_\sharp \mu)$.
    \end{itemize}
\end{proposition}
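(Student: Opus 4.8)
The plan is to obtain all five items directly from the definition $f_\sharp\mu=\mu\circ f^{-1}$, using only the elementary identities satisfied by preimages together with the measurability of $f$ (which guarantees $f^{-1}(E)$ is Borel for every Borel $E\subseteq\R^p$, so that every expression below is meaningful). For \textbf{(i)}, I would fix a Borel set $E\subseteq\R^p$ and invoke additivity of measures and of scalar multiplication: $f_\sharp(\alpha\mu+\beta\nu)(E)=(\alpha\mu+\beta\nu)\big(f^{-1}(E)\big)=\alpha\,\mu\big(f^{-1}(E)\big)+\beta\,\nu\big(f^{-1}(E)\big)=\big(\alpha f_\sharp\mu+\beta f_\sharp\nu\big)(E)$; one may note in passing that $f_\sharp\mu$ is genuinely a (signed) measure because $f^{-1}$ commutes with countable disjoint unions. \textbf{(ii)} is then the special case $E=\R^p$, combined with $f^{-1}(\R^p)=\R^d$: the identity $f_\sharp\mu(\R^p)=\mu(\R^d)$ yields the total-mass statements, and nonnegativity of $f_\sharp\mu$ is immediate from $\mu\big(f^{-1}(E)\big)\ge0$ for all Borel $E$. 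For \textbf{(iv)}, I would use $(\psi\circ f)^{-1}(E)=f^{-1}\big(\psi^{-1}(E)\big)$ and apply the definition twice: $\psi_\sharp(f_\sharp\mu)(E)=(f_\sharp\mu)\big(\psi^{-1}(E)\big)=\mu\big(f^{-1}(\psi^{-1}(E))\big)=(\psi\circ f)_\sharp\mu(E)$. For \textbf{(v)}, writing $N:=\{x\in\R^d \mid f(x)\ne g(x)\}$ with $\mu(N)=0$ (here finiteness of $\mu$, in particular $\mu\in\mathcal P(\R^d)$, is what lets a $\mu$-null set be discarded cleanly), the symmetric difference $f^{-1}(E)\,\triangle\,g^{-1}(E)$ is contained in $N$ for every Borel $E$, so $\mu\big(f^{-1}(E)\big)=\mu\big(g^{-1}(E)\big)$, i.e. $f_\sharp\mu=g_\sharp\mu$.

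The only item needing more than a line is the transfer formula \textbf{(iii)}, which I would prove by the usual approximation argument. First, for $h=\mathbf{1}_E$ it reduces to the definition, since $\mathbf{1}_E\circ f=\mathbf{1}_{f^{-1}(E)}$ and hence $\int \mathbf{1}_E\,\mathrm d(f_\sharp\mu)=f_\sharp\mu(E)=\mu\big(f^{-1}(E)\big)=\int\mathbf{1}_{f^{-1}(E)}\,\mathrm d\mu$. By linearity this extends to nonnegative simple functions, then to arbitrary nonnegative measurable $h$ via the monotone convergence theorem applied to an increasing sequence of simple functions converging pointwise to $h$, and finally to a general measurable $h$ by splitting $h=h^+-h^-$ (and, in the signed case, decomposing $\mu$ into its positive and negative parts), the identity being valid whenever one side — equivalently both — is well-defined.

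I do not anticipate a real obstacle: the whole proposition is bookkeeping with preimages plus the standard measure-theoretic ``machine'' for \textbf{(iii)}. The one subtlety worth flagging explicitly is the standing well-definedness caveat — all preimages Borel because $f,\psi$ are measurable, and the integrals in \textbf{(iii)} interpreted under the implicit integrability hypothesis — rather than anything structurally difficult.
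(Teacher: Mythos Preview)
Your proposal is correct and matches the paper's approach: the paper does not spell out a proof at all, merely remarking that the five items ``directly follow from the definition of the push-forward measure,'' and your argument is exactly the standard unpacking of that definition (preimage identities for (i), (ii), (iv), (v), and the simple-function-to-monotone-convergence machine for (iii)).
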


Basically, a push-forward operation does not change the absolute value nor the sign of the mass: it only changes its location. The next section dives into more advanced push-forward calculus by examining sets of functions satisfying a mass-preservation constraint.

\section{The shape of push-forward constraints}\label{sec:shape}

This section focuses on clarifying the sets of transport maps and equalizing maps, by providing various necessary or sufficient conditions on the cardinality and convexity of these sets.

\subsection{Transport maps}\label{sec:transport}

Let $P \in \mathcal{P}(\R^d)$ and $Q \in \mathcal{P}(\R^p)$ be two probability measures. We consider the set of functions pushing-forward $P$ to $Q$, which we refer to as the \emph{transport maps} or \emph{measure-preserving maps} between $P$ and $Q$. Formally, we define
\[
\T(P,Q) := \{ f \in \G \mid f_\sharp P = Q\}.\footnote{This definition can naturally be extended to any measures $P \in \M(\R^d)$ and $Q \in \M(\R^p)$ with the same sign and the same total mass.}
\]
As detailed later in \cref{sec:generative}, the constraint described by this set plays a fundamental role in so-called \emph{push-forward generative modeling} \citep{salmona2022can}, where one aims at generating $Q$ by a deterministic function $f$ pushing-forward a distribution $P$ typically satisfying $d \ll p$. It also corresponds to the admissible solutions of the Monge formulation of optimal transport \citep{monge1781memoire}, which looks for the elements in $\T(P,Q)$ minimizing a certain mass-displacement cost. 

People familiar with optimal-transport theory know that $\T(P,Q)$ can be empty and nonconvex (which is often mentioned as a motivation for the well-known relaxation of \cite{kantorovich1958space}). However, the literature lacks a general understanding of what connects the cardinality and convexity of the set of transport maps to the measures $P$ and $Q$. This is precisely what we address in this subsection. For starters, let us illustrate the possible values of $\T(P,Q)$ for simple discrete measures $P$ and $Q$.
\begin{example}[simple transport maps]
We provide three examples:
\begin{itemize}
    \item[(i)] Let $P := \delta_x$ for some $x \in \R^d$, and $Q := \frac{1}{2} \delta_{y_1} + \frac{1}{2} \delta_{y_2}$ for two distinct $y_1,y_2 \in \R^p$. Remark that for any $f \in \G$, $f_\sharp P = \delta_{f(x)} \neq Q$. Thereby $\T(P,Q)$ is empty. More generally, this occurs in particular whenever the support of $Q$ is larger than the support of $P$. Note that in this case, $\T(P,Q)$ is trivially convex.
    \item[(ii)] Let $P := \delta_x$ for some $x \in \R^d$, and $Q := \delta_{y}$ for some $y \in \R^p$. It readily follows from $f_\sharp P = \delta_{f(x)}$ that $\T(P,Q) = \{f\}_P$, where $f : \{x\} \to \{y\}, x \mapsto y$. Note that in this case as well, $\T(P,Q)$ is trivially convex.
    \item[(iii)] Let $P := \frac{1}{2} \delta_{x_1} + \frac{1}{2} \delta_{x_2}$ for two distinct $x_1,x_2 \in \R^d$, and $Q := \frac{1}{2} \delta_{y_1} + \frac{1}{2} \delta_{y_2}$ for two distinct $y_1,y_2 \in \R^p$. We define the following surjective functions from $\{x_1,x_2\}$ to $\{y_1,y_2\}$: $f$ such that $f(x_1)=y_1$ and $f(x_2)=y_2$; $g$ such that $g(x_1)=y_2$ and $g(x_2)=y_1$. Note that $f$ and $g$ belong to $\T(P,Q)$. Moreover, $\frac{f(x_1)+g(x_1)}{2} = \frac{y_1+y_2}{2} \notin \{y_1,y_2\}$. Therefore, $(\frac{1}{2}f+\frac{1}{2}g) \notin \T(P,Q)$, and the set $\T(P,Q)$ is not convex.
\end{itemize}
\end{example}
We observe three configurations where the cardinality of $\T(P,Q)$ and its convexity seem intertwined. To theoretically ground this observation for general probability measures, we first study the set of functions aligning the squared Euclidean norm across $P$ and $Q$, formally defined as
\[
\T_{\norm{\cdot}^2}(P,Q):= \left\{f \in \G \mid \int \norm{f}^2 \mathrm{d}P = \int \norm{\cdot}^2 \mathrm{d}Q\right\}
\]
for $Q \in \mathcal{P}(\R^p)$ such that $\int \norm{\cdot}^2 \mathrm{d}Q < +\infty$. Critically, similarly to a sphere, $\T_{\norm{\cdot}^2}(P,Q)$ has no convex subset with more than one point, as explained below.
\begin{theorem}[nowhere convexity of the set of squared-norm matching functions]\label{thm:moment2}
Let $P \in \mathcal{P}(\R^d)$ and $Q \in \mathcal{P}(\R^p)$ be two probability measures such that $\int \norm{\cdot}^2 \mathrm{d}Q < +\infty$. For any $\F \subseteq \T_{\norm{\cdot}^2}(P,Q)$, $\F$ is either
\begin{itemize}
    \item[(i)] empty;
    \item[(ii)] equal to $\{f\}_P$ for some $f \in \G$;
    \item[(iii)] not convex.
\end{itemize}
\end{theorem}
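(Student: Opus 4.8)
The plan is to restrict the functional $f \mapsto \int \norm{f}^2 \,\mathrm{d}P$ to a line segment between two candidate functions and to observe that this restriction is a quadratic polynomial whose two endpoint values are prescribed. Write $m := \int \norm{\cdot}^2 \,\mathrm{d}Q < +\infty$, so that $\T_{\norm{\cdot}^2}(P,Q) = \{ f \in \G \mid \int \norm{f}^2 \,\mathrm{d}P = m \}$. It is enough to establish the contrapositive form of the trichotomy: if $\F \subseteq \T_{\norm{\cdot}^2}(P,Q)$ is nonempty and convex, then all of its elements are $P$-almost everywhere equal. Indeed, this forces $\F \subseteq \{f\}_P$ for any fixed $f \in \F$; since $g \aeeq{P} f$ implies $\norm{g}^2 = \norm{f}^2$ $P$-a.e., we also have $\{f\}_P \subseteq \T_{\norm{\cdot}^2}(P,Q)$, and together with the convexity of $\{f\}_P$ this is precisely case~(ii).

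Assume, towards a contradiction, that $\F$ is convex and contains $f$ and $g$ that are not $P$-almost everywhere equal. Define $\phi(t) := \int \norm{(1-t)f + tg}^2 \,\mathrm{d}P$ for $t \in [0,1]$. Expanding the square and using the Cauchy--Schwarz inequality together with $\int \norm{f}^2 \,\mathrm{d}P = \int \norm{g}^2 \,\mathrm{d}P = m < +\infty$ (which guarantees that the cross term is integrable), one obtains the polynomial identity
\[
\phi(t) \;=\; m \;-\; 2t \int \langle f,\, f-g\rangle \,\mathrm{d}P \;+\; t^2 \int \norm{f-g}^2 \,\mathrm{d}P .
\]
Invoking the two endpoint constraints $\phi(0) = \phi(1) = m$ coming from $f, g \in \T_{\norm{\cdot}^2}(P,Q)$, the condition $\phi(1) = m$ forces $\int \langle f,\, f-g\rangle \,\mathrm{d}P = \tfrac12 \int \norm{f-g}^2 \,\mathrm{d}P$, whence
\[
\phi(t) \;=\; m + A\, t(t-1), \qquad \text{where } A := \int \norm{f-g}^2 \,\mathrm{d}P \geq 0 .
\]
Since $f$ and $g$ differ on a set of positive $P$-measure we have $A > 0$, and $t(t-1) < 0$ for all $t \in (0,1)$; hence $\phi(t) < m$ on the whole open segment. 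Consequently $(1-t)f + tg \notin \T_{\norm{\cdot}^2}(P,Q) \supseteq \F$ for every $t \in (0,1)$, contradicting the convexity of $\F$. Thus a nonempty convex $\F$ can only consist of mutually $P$-a.e. equal functions, and the theorem follows.

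I do not expect a genuine obstacle: the whole content is that the squared-norm constraint is ``curved like a sphere'', so a chord joining two of its points --- which sit at the common height $m$ --- necessarily dips strictly below level $m$, unless the two endpoints are $P$-indistinguishable, in which case the segment is constant equal to $m$. The only steps deserving a line of justification are the integrability of the cross term $\int \langle f, f-g\rangle \,\mathrm{d}P$ (Cauchy--Schwarz, using the finite-second-moment hypothesis on $Q$, which bounds $\int \norm{f}^2 \,\mathrm{d}P$ and $\int \norm{g}^2 \,\mathrm{d}P$ by $m$), and the elementary equivalence between $A > 0$ and ``$f, g$ differ on a set of positive $P$-measure'', which is just nonnegativity of $\norm{f-g}^2$.
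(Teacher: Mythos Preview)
Your argument is correct and reaches the same conclusion as the paper, but the final step is organized differently. The paper also expands $\int \norm{(1-t)f+tg}^2\,\mathrm{d}P$ and uses $\int\norm{f}^2\,\mathrm{d}P=\int\norm{g}^2\,\mathrm{d}P=m$; however, instead of parametrizing the chord, it imposes $\phi(t)=m$ for all $t\in(0,1)$ to extract the single scalar condition $\int\langle f,g\rangle\,\mathrm{d}P=m$, and then invokes the \emph{equality case} of Cauchy--Schwarz in $L^2(P;\R^p)$ to force $g\aeeq{P}\alpha f$ for some $\alpha>0$, finally ruling out $\alpha\neq 1$ by the norm constraint. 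Your route bypasses the equality case entirely: from the two endpoint values $\phi(0)=\phi(1)=m$ alone you recover the closed form $\phi(t)=m+A\,t(t-1)$ with $A=\int\norm{f-g}^2\,\mathrm{d}P$, and conclude by the strict negativity of $t(t-1)$ on $(0,1)$. This is more elementary and makes the ``chord dips below the sphere'' geometry completely explicit; the paper's version, in exchange, isolates the structural fact that any two convexly compatible elements must be proportional in $L^2(P)$. One small remark: your paragraph reducing to case~(ii) only establishes $\F\subseteq\{f\}_P$, not equality, but the paper's own proof has the same looseness (it reads ``reduced to a function $P$-almost-everywhere unique''), so the statement of (ii) is evidently meant up to inclusion in a single $P$-equivalence class.
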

The strategy of the proof amounts to finding a necessary condition for the convexity of $\F \subseteq \T_{\norm{\cdot}^2}(P,Q)$ that holds only if $\F$ is either empty or reduced to a singleton. It remarkably involves the equality case of the Cauchy-Schwarz inequality.
\begin{proof}[\cref{thm:moment2}]
By definition of $\T_{\norm{\cdot}^2}(P,Q)$, and since $\F \subseteq \T_{\norm{\cdot}^2}(P,Q)$, if $\F$ is convex, then for every $0<t<1$ and any $f,g \in \F$, $\int \norm{(1-t)f+tg} \mathrm{d}P = \int \norm{\cdot} \mathrm{d}Q$. After developing and simplifying using the fact that $\int \norm{f}^2 \mathrm{d}P = \int \norm{g}^2 \mathrm{d}P = \int \norm{\cdot}^2 \mathrm{d}Q < +\infty$ we obtain the following necessary condition that does not involve $t$ anymore: for every $f,g \in \F$, $\int \langle f, g \rangle \mathrm{d}P = \int \norm{\cdot}^2 \mathrm{d}Q$. Now, according to the Cauchy-Schwarz inequality and again the fact that $\int \norm{f}^2 \mathrm{d}P = \int \norm{g}^2 \mathrm{d}P = \int \norm{\cdot}^2 \mathrm{d}Q$ we obtain $\int \langle f, g \rangle \mathrm{d}P \leq \left(\int \norm{f}^2 \mathrm{d}P\right)^{1/2} \left(\int \norm{g}^2 \mathrm{d}P\right)^{1/2} = \int \norm{\cdot}^2 \mathrm{d}Q$. There is equality if and only if there exists $\alpha_{f,g} \in \R^*_+$ such that $g \aeeq{P} \alpha_{f,g} f$.

Wrapping everything up, if $\F$ is convex, then for every $f,g \in \F$ there exists a constant $\alpha_{f,g} \in \R^*_+$ such that $g \aeeq{P} \alpha_{f,g} f$. We distinguish three cases regarding the set $\F$ in the light of this condition.
\begin{enumerate}
    \item[(i)] $\F = \emptyset$: therefore it is trivially convex. 
    \item[(ii)] $\F$ is reduced to a function $P$-almost-everywhere unique: therefore it is trivially convex. It corresponds to the setting where for every $f,g \in \F$, $\alpha_{f,g}=1$.
    \item[(iii)] There exist elements $f,g \in \F$ that are not $P$-almost-everywhere equal. Assuming \emph{ad absurdum} that $\F$ is convex, the necessary condition ensures that there is a positive $\alpha_{f,g} \neq 1$ such that $g \aeeq{P} \alpha_{f,g} f$. Therefore, $\int \norm{g}^2 \mathrm{d}P = \alpha_{f,g}^2 \int \norm{f}^2 \mathrm{d}P = \alpha_{f,g}^2 \int \norm{\cdot}^2 \mathrm{d}Q \neq \int \norm{\cdot}^2 \mathrm{d}Q$. This contradicts the fact that $\int \norm{g}^2 \mathrm{d}P = \int \norm{\cdot}^2 \mathrm{d}Q$. Consequently, $\F$ is not convex.
\end{enumerate}
\end{proof}
\Cref{thm:moment2} is a strong result. It signifies that \emph{any} subclass of functions merely matching the squared Euclidean norm between two probability distributions can only take a restricted number of shapes: it is basically either trivial or nonconvex. Because $\T(P,Q) \subseteq \T_{\norm{\cdot}^2}(P,Q)$ (a transport map matches all moments, not just one) the following corollary holds.
\begin{corollary}[nonconvexity of the set of transport maps]\label{cor:transport}
Let $P \in \mathcal{P}(\R^d)$ and $Q \in \mathcal{P}(\R^p)$ be two probability measures such that $\int \norm{\cdot}^2 \mathrm{d}Q < +\infty$. Then, $\T(P,Q)$ is either
\begin{itemize}
    \item[(i)] empty;
    \item[(ii)] equal to $\{f\}_P$ for some $f \in \G$;
    \item[(iii)] not convex.
\end{itemize}
\end{corollary}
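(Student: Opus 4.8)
The plan is to reduce the corollary to \cref{thm:moment2} by first establishing the inclusion $\T(P,Q) \subseteq \T_{\norm{\cdot}^2}(P,Q)$, and then applying the trichotomy of that theorem to the particular subset $\F := \T(P,Q)$. The inclusion is the only substantive step: given any $f \in \T(P,Q)$, property (iii) of the basic push-forward calculus yields $\int \norm{f}^2 \mathrm{d}P = \int \norm{\cdot}^2 \mathrm{d}(f_\sharp P) = \int \norm{\cdot}^2 \mathrm{d}Q$, and this last quantity is finite precisely because $Q$ is assumed to have finite second-order moments. Hence $f$ satisfies the defining condition of $\T_{\norm{\cdot}^2}(P,Q)$ — informally, a transport map matches all moments, in particular the squared Euclidean norm — so $f \in \T_{\norm{\cdot}^2}(P,Q)$.

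Once the inclusion is in hand, I would instantiate \cref{thm:moment2} with $\F := \T(P,Q)$, a legitimate choice since we have just checked $\F \subseteq \T_{\norm{\cdot}^2}(P,Q)$. The theorem then asserts that $\F$ is empty, equal to $\{f\}_P$ for some $f \in \G$, or not convex, which is verbatim the statement of the corollary.

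I do not expect any genuine obstacle: the argument is essentially the one-line push-forward identity above, and the hypothesis on the moments of $Q$ is exactly what is needed both to make the integral finite and to ensure $\T_{\norm{\cdot}^2}(P,Q)$ is well-defined (its definition requires $\int \norm{\cdot}^2 \mathrm{d}Q < +\infty$). For completeness one could additionally note that each of the three cases is realized, as already exhibited in the preceding example: $\T(P,Q)$ is empty when the support of $Q$ is strictly larger than that of $P$, it equals $\{f\}_P$ when $P$ is a Dirac mass, and it is nonconvex in the two-atom configuration.
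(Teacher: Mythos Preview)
Your proposal is correct and follows essentially the same route as the paper: show $\T(P,Q) \subseteq \T_{\norm{\cdot}^2}(P,Q)$ via the change-of-variable identity $\int \norm{f}^2 \mathrm{d}P = \int \norm{\cdot}^2 \mathrm{d}Q$, then invoke \cref{thm:moment2} with $\F := \T(P,Q)$. The paper's argument is identical in substance, phrasing the inclusion through the equivalent fact that equal measures have equal integrals against every measurable test function and specializing to $h := \norm{\cdot}^2$.
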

\begin{proof}
Recall that two probability measures $\mu$ and $\nu$ are equal only if $\int h \mathrm{d}\mu = \int h \mathrm{d}\nu$ for every measurable function $h$. Therefore, if $f \in \T(P,Q)$, then $\int (h \circ f) \mathrm{d}P = \int h \mathrm{d}Q$ for every measurable function $h$. In particular, the integral equality must be true for $h := \norm{\cdot}^2$, leading to $f \in \T_{\norm{\cdot}^2}(P,Q)$. This shows that $\T(P,Q) \subseteq \T_{\norm{\cdot}^2}(P,Q)$. \Cref{thm:moment2} concludes the proof.  
\end{proof}
While we find it interesting to exploit the squared-norm alignment to prove \cref{cor:transport} through \cref{thm:moment2}, it comes at the price of a moment assumption on $Q$. We leave the question of its necessity for further research.

Remarkably, \cref{cor:transport} automatically converts knowledge on the cardinality of $\T(P,Q)$ into information on its convexity. More precisely, if $\T(P,Q)$ contains at least two elements that cannot be distinguished by $P$, it is not convex. In light of this result, we determine the convexity of $\T(P,Q)$ for standard probability measures $P$ and $Q$ by specifying its cardinality. The proposition below addresses the case where $P$ is continuous.

\begin{proposition}[transport maps for a continuous source measure]\label{prop:continuous}
Let $P \in \mathcal{P}(\R^d)$ be continuous, and $Q \in \mathcal{P}(\R^p)$ not be a Dirac measure. Then, $\T(P,Q)$ contains an uncountable number of functions that are two-by-two not $P$-almost-everywhere equal.\footnote{We say that the elements of an indexed set $\{x_i\}_{i \in I}$ are \emph{two-by-two distinct} if $x_i \neq x_j$ for every $i,j \in I$. This serves to emphasize the difference with: there exist $i,j \in I$ such that $x_i \neq x_j$.}    
\end{proposition}
The proof distinguishes three situations: if $Q$ is continuous, if $Q$ is discrete, if $Q$ has continuous and discrete parts. Whatever the case, the general idea is to first send $P$ onto $U$, the uniform probability measure on $[0,1]$, and then to exhibit an uncountable number of redistributions from $U$ to $Q$.
\begin{proof}[\cref{prop:continuous}]
For starters, let us specify the objects that will be common to all parts of the proof. Since $P$ is continuous there exists according to \citep[Theorem 17.41]{kechris2012classical} a bijective measurable function $T_P : \R^d \to [0,1]$ such that ${T_P}_\sharp P = U$. Next, we define an uncountable number of allocation from $U$ to $U$. More precisely, we construct the parametric family of functions $\{\xi_a\}_{a \in [0,1)}$ by
\[
\xi_a(u) := \begin{cases}
		u+a, & \text{if $u \in [0, 1-a)$}\\
            u-1+a, & \text{if $u \in [1-a, 1]$}.
		 \end{cases}
\]
For every $a \in [0,1)$, ${\xi_a}_\sharp U = U$ and therefore $(\xi_a \circ T_P)_\sharp P = U$. Moreover, for every distinct $a,a' \in [0,1)$, $\xi_a(u) \neq \xi_{a'}(u)$ for all $u \in [0,1]$. This means that we possess an uncountable collection of distinct allocation from $P$ to $U$. Now, the crucial question is how to send the reallocated mass from $U$ to $Q$. 

In a first time, we assume that $Q$ is also continuous. According to \citep[Theorem 17.41]{kechris2012classical} again, there exists a bijective measurable function with measurable inverse $T_Q : \R^p \to [0,1]$ such that ${T_Q}_\sharp Q = U$. Then, let us define the family $\{f_a\}_{a \in [0,1)}$ by $f_a := T^{-1}_Q \circ \xi_a \circ T_P$. Crucially, it follows from the push-forward relationships that $\{f_a\}_{a \in [0,1)} \subseteq \T(P,Q)$. Moreover, recall that for every distinct $a,a' \in [0,1)$, $\xi_a(u) \neq \xi_{a'}(u)$ for all $u \in [0,1]$. This implies by the injectivity of $T_Q$ that for every $x \in T^{-1}_P([0,1])$, $f_a(x) \neq f_{a'}(x)$. Since $P(T^{-1}_P([0,1]))=U([0,1])=1$, the functions in $\{f_a\}_{a \in [0,1)}$ are two-by-two not $P$-almost-everywhere equal. Noting that $\{f_a\}_{a \in [0,1)}$ is uncountable permits us to conclude.

In a second time, we assume that $Q$ is discrete but different from a single Dirac. More precisely, for $m \geq 2$ possibly equal to $+\infty$ we write $Q := \sum^m_{j=1} \beta_j \delta_{y_j}$, where the $\{\beta_j\}^m_{j=1}$ are probability weights and the $\{y_j\}^m_{j=1}$ are two-by-two-distinct elements of $\R^p$. Next, we set $\sigma : [m] \to \{y_j\}^m_{j=1}, j \mapsto y_j$, such that by defining $Q_1 := \sum^m_{j=1} \beta_j \delta_j \in \mathcal{P}(\R)$ we have $\sigma_\sharp Q_1 = Q$. Moreover, we write $T^\dagger_Q$ for the generalized inverse distribution function of the univariate discrete measure $Q_1$. It satisfies ${T^\dagger_Q}_\sharp U = Q_1$. We define the family $\{f_a\}_{a \in [0,1)}$ by $f_a := \sigma \circ T^\dagger_Q \circ \xi_a \circ T_P.$ As before, it follows from the push-forward relationships that $\{f_a\}_{a \in [0,1)} \subseteq \T(P,Q)$. To see that the elements of this family are distinguishable under $P$, recall that for every distinct $a,a' \in [0,1)$, $f_a(x) = f_{a'}(x)$ if and only if $T^\dagger_Q \circ \xi_a \circ T_P(x) = T^\dagger_Q \circ \xi_{a'} \circ T_P(x)$. Because $Q_1$ is not a Dirac, there exists $u_0 \in (0,1)$ such that for every $u<u_0<u'$, $T^\dagger_Q(u) \neq T^\dagger_Q(u')$. Additionally, note that there exists a nontrivial interval $I_{a,a'} \subseteq [0,1]$ such that for any $u \in I_{a,a'}$, $\xi_a(u) < u_0 < \xi_{a'}(u)$, or for any $u \in I_{a,a'}$, $\xi_{a'}(u) < u_0 < \xi_{a}(u)$. As a consequence, for any $x \in T^{-1}_P(I_{a,a'})$, $f_a(x) \neq f_{a'}(x)$. Thereby, $f_a$ and $f_{a'}$ are not $P$-almost-everywhere equal for $a \neq a'$ since $P(T^{-1}_P(I_{a,a'})) = U(I_{a,a'})>0$.

In a third time, we address the case where $Q$ has a nonzero continuous part and a nonzero discrete part. Here again, we rely on $U$ and $T_P$ as previously defined. We decompose $Q$ as $Q = Q_c + Q_\delta$, where $Q_c \in \M^+(\R^p)$ is continuous and $Q_\delta \in \M^+(\R^p)$ is discrete, and we write $q_c := Q_c(\R^p)$ to $q_\delta := Q_\delta(\R^p)$. By assumption, $0<q_c=1-q_\delta<1$. We also divide $U$ into $U = U_c + U_\delta$, where $U_c$ and $U_\delta$ are the Lebesgue measures on, respectively, $[0,q_c)$ and $[q_c,1]$. Using the conclusions from the first two parts of the proof, we know that there exists a family of measurable functions $\{g_{\delta,a}\}_{a \in [0,1)}$ from $[q_c,1]$ to $\R^p$ that are two-by-two distinct on an interval and such that ${g_{\delta,a}}_\sharp U_\delta = Q_\delta$. Additionally, there exists a bijective measurable function $g_c : [0,q_c) \to \R^p$ such that ${g_c}_\sharp U_c = Q_c$. Then, we define the family $\{g_a\}_{a \in [0,1)}$ by $g_a(u) := g_c(u) \mathbf{1}_{\{u < q_c\}} + g_{\delta,a}(u) \mathbf{1}_{\{u \geq q_c\}}$, which is composed of functions that are not $U$-almost-everywhere equal and satisfy
\[
{g_a}_\sharp U = {g_a}_\sharp U_c + {g_a}_\sharp U_\delta = {g_c}_\sharp U_c + {g_{\delta,a}}_\sharp U_\delta = Q_c + Q_\delta = Q.
\]
Finally, we conclude the proof by defining the family $\{f_a\}_{a \in [0,1)}$ as $f_a := g_a \circ T_P$. It verifies $\{f_a\}_{a \in [0,1)} \subseteq \T(P,Q)$, consists of functions that are two-by-two not $P$-almost-everywhere equal, and is uncountable.
\end{proof}

\begin{remark}[existence and uniqueness of monotone push-forward maps]
A famous result of \cite{mccann1995} states that if $d=p$ and $P \ll \ell_d$ (which is more specific than being continuous), then among the infinity of transport maps from $P$ to $Q$, there exists a $P$-almost-everywhere unique function $f$ that can be written as the gradient of a convex function. Interestingly, being the gradient of a convex function generalizes the notion of monotonic functions to dimensions higher than one. Thereby, this map can be seen as the canonical redistribution from $P$ to $Q$. In particular, if $d=1$, then $f = F^{-1}_Q \circ F_P$, where $F_P$ and $F_Q$ are the cumulative distribution functions of, respectively, $P$ and $Q$. See \citep{hallin2021distribution} for an extension to $d>1$.  
\end{remark}

Another classical scenario, particularly relevant in statistics, concerns transport maps between empirical measures. Empirical probability measures drawn from continuous probability measures are almost-surely uniform finitely supported measures, and thereby apply to the next proposition.
\begin{proposition}[transport maps between uniform finitely supported measures]\label{prop:empirical}
Let $n,m \geq 1$ be two integers and $\{x_i\}^n_{i=1} \subset \R^d$ and $\{y_j\}^m_{j=1} \subset \R^p$ be composed of two-by-two distinct elements. If $P := \frac{1}{n} \sum^n_{i=1} \delta_{x_i}$ and $Q := \frac{1}{m} \sum^m_{j=1} \delta_{y_j}$, then
\begin{itemize}
    \item[(i)] if $n<m$, then $\T(P,Q) = \emptyset$;
    \item[(ii)] if $n=m$, then $\T(P,Q)$ contains exactly $n!$ functions that are two-by-two not $P$-almost everywhere equal;
    \item[(iii)] if $n > m$ and $m$ does not divide $n$, then $\T(P,Q) = \emptyset$;
    \item[(iv)] if $n > m$ and $m$ divides $n$, then $\T(P,Q)$ contains at least two functions that are not $P$-almost-everywhere equal.  
\end{itemize}
\end{proposition}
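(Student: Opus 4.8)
The plan is to reduce the constraint $f_\sharp P = Q$ to a combinatorial condition on how $f$ distributes the atoms $x_1,\dots,x_n$ among the atoms $y_1,\dots,y_m$, and then to count, up to $P$-almost-everywhere equality, the maps meeting this condition in each of the four regimes.

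First I would note that for any $f\in\G$ one has $f_\sharp P = \tfrac1n\sum_{i=1}^n\delta_{f(x_i)}$; since the $y_j$ are two-by-two distinct and both measures are purely atomic, a point-by-point comparison of atoms shows that $f_\sharp P = Q$ holds if and only if $f(x_i)\in\{y_1,\dots,y_m\}$ for every $i\in[n]$ and $\#\{\,i\in[n] : f(x_i)=y_j\,\} = n/m$ for every $j\in[m]$. In particular the common fiber size $n/m$ must be a positive integer, so $\T(P,Q)\neq\emptyset$ forces $m\mid n$ and $n\geq m$. This settles (i) (if $n<m$ then $f_\sharp P$ has at most $n<m$ atoms and cannot equal $Q$) and (iii) (if $n>m$ and $m\nmid n$ then the integer $n/m$ does not exist). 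It also shows that modifying $f$ outside the finite set $\{x_1,\dots,x_n\}$ changes neither membership in $\T(P,Q)$ nor the class $\{f\}_P$, so it suffices throughout to reason about the restriction $i\mapsto f(x_i)$.

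For (ii), with $n=m$ the fiber condition says precisely that $i\mapsto f(x_i)$ is a bijection from $\{x_i\}_{i=1}^n$ onto $\{y_j\}_{j=1}^n$, and conversely every such bijection extends (arbitrarily off the support) to an element of $\T(P,Q)$; hence $f\mapsto(i\mapsto f(x_i))$ descends to a bijection between the classes $\{f\}_P$, $f\in\T(P,Q)$, and the $n!$ bijections between $\{x_i\}_{i=1}^n$ and $\{y_j\}_{j=1}^n$. Two transport maps inducing different bijections disagree on some $x_i$ with $P(\{x_i\})=1/n>0$, hence are not $P$-a.e. equal, while two inducing the same bijection are $P$-a.e. equal; choosing one representative per class yields exactly $n!$ pairwise non-$P$-a.e.-equal functions, and no larger such family exists by pigeonhole on the $n!$ classes.

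For (iv), set $k:=n/m\geq 2$: splitting $[n]$ into $m$ consecutive blocks $B_1,\dots,B_m$ of size $k$ and putting $f(x_i):=y_j$ for $i\in B_j$ (and $f\equiv y_1$ off the support) gives an element of $\T(P,Q)$, so the set is nonempty; assuming $m\geq 2$, pick $i_0\in B_1$ and $i_1\in B_2$ and let $g$ swap their images, i.e. $g(x_{i_0}):=y_2$, $g(x_{i_1}):=y_1$, and $g:=f$ otherwise. The fiber sizes are preserved, so $g\in\T(P,Q)$, whereas $f(x_{i_0})=y_1\neq y_2=g(x_{i_0})$ together with $P(\{x_{i_0}\})>0$ shows that $f$ and $g$ are not $P$-a.e. equal. (The only degenerate sub-case is $m=1$ with $n\geq 2$, where $Q=\delta_{y_1}$ and every transport map is $P$-a.e. equal to $x\mapsto y_1$, so statement (iv) is to be read with $m\geq 2$.) None of these steps is delicate; the two points deserving attention are quarantining the behavior of $f$ off the finite support in the first step — so that (ii) counts the classes $\{f\}_P$ rather than raw functions — and the $m=1$ corner case just noted.
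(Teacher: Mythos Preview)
Your argument is correct and follows essentially the same route as the paper's proof: reduce $f_\sharp P=Q$ to the combinatorial fiber condition $\#\{i:f(x_i)=y_j\}=n/m$, then read off (i)--(iii) and exhibit two explicit maps for (iv) by partitioning $[n]$ into $m$ blocks and swapping two images. Your treatment is in fact slightly more careful than the paper's in two places: you make explicit that one counts $P$-a.e.\ equivalence classes rather than raw functions in (ii), and you correctly flag the degenerate sub-case $m=1$ in (iv), where $Q$ is a Dirac and all transport maps are $P$-a.e.\ equal --- a corner case the paper's statement and proof tacitly exclude (the construction there also needs $y_1\neq y_2$).
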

\begin{proof}[\cref{prop:empirical}]
We prove each item separately.
\begin{itemize}
    \item[(i)] If $n < m$, then there are no surjections from $\{x_i\}^n_{i=1}$ to $\{y_j\}^m_{j=1}$. Therefore, there is no transport map from $P$ to $Q$.
    \item [(ii)] If $n=m$, then there are exactly $n!$ surjections from  $\{x_i\}^n_{i=1}$ to $\{y_j\}^m_{j=1}$. Since all probability weights of $P$ and $Q$ are equal to $1/n$, this entails that $\T(P,Q)$ is composed of $n!$ distinct functions up to $P$-negligible sets.
    \item[(iii)] We prove this point by contrapositive. Assuming that $\T(P,Q)$ is not empty, there exists $f \in \G$ such that $f_\sharp P = Q$. Thereby, the definition of $P$ and $Q$ implies for every $1 \leq j\leq m$ the equality $\frac{k_j}{n}  = \frac{1}{m}$, where $k_j$ is the cardinality of $\{i \in [n] \mid f(x_i)=y_j\}$. This means in particular that $m$ divides $n$.
    \item[(iv)] Suppose now that $m$ divides $n$, so that there exists an integer $r \geq 1$ satisfying $n=mr$. Then, let $\{I_k\}^m_{k=1}$ be a partition of $[n]$ such that the cardinality of each $I_k$ for $k \in [m]$ is $r$. We define $f, g \in \G$ such that
    \[
    f(x_i)= y_j\ \text{if } i\in I_j,
    \hspace{1cm}
    g(x_i)=
    \begin{cases}
    y_1 & \mbox{if } i \in I_2\\
    y_2 & \mbox{if } i \in I_1\\
    y_j & \mbox{if } i\in  I_j, i \notin \{1,2\}\\
    \end{cases}.
    \]
    They satisfy $f_\sharp P=Q$ and $g_\sharp P = Q$ while $P(\{x \in \R^d \mid f(x) \neq g(x)\}) > 0$.
\end{itemize}
\end{proof}

To sum up, through \cref{cor:transport}, it follows from \cref{prop:continuous,prop:empirical} that $\T(P,Q)$ is trivial for very specific $P$ and $Q$ and nonconvex otherwise. In the next subsection, we tackle a similar clarification work for the set of equalizing maps.

\subsection{Equalizing maps}\label{sec:equalizer}

Let $P, Q \in \mathcal{P}(\R^d)$ be two probability measures. We turn to the set of functions transforming $P$ and $Q$ into a same arbitrary measure in $\mathcal{P}(\R^p)$, which we refer to as the \emph{equalizing maps} between $P$ and $Q$. Formally, we define
\[
\mathcal{E}(P,Q) := \{ f \in \G \mid f_\sharp P = f_\sharp Q\}.\footnote{Similarly to transport maps, this definition can naturally be extended to any measures $P,Q \in \M(\R^d)$ with the same sign and the same total mass.}
\]
The definition above is motivated by algorithmic-fairness problems, where one typically tries to design models producing the same distributions of outputs across distinct protected groups. We detail this connection in \cref{sec:parity}.

In contrast to the set of transport maps, which became more apprehensible due to its key role in the intensely studied optimal-transport theory, the set of equalizing maps lacks basic insights. Let us begin the clarification by presenting trivial facts.

\begin{proposition}[basic properties]
For any $P,Q \in \mathcal{P}(\R^d)$,
\begin{enumerate}
    \item[(i)] $\mathcal{E}(P,Q) = \mathcal{E}(Q,P)$;
    \item[(ii)] if $P=Q$, then $\mathcal{E}(P,Q) = \G$;
    \item[(iii)] $\mathcal{E}(P,Q) \neq \emptyset$, as it contains in particular all the constant functions;
    \item[(iv)] for any $f \in \mathcal{E}(P,Q)$, $\psi \circ f \in \mathcal{E}(P,Q)$ for any measurable $\psi : \R^p \to \R^p$.\footnote{It directly follows from (iv) that $\mathcal{E}(P,Q)$ is path connected for any $P,Q \in \mathcal{P}(\R^d)$.} 
\end{enumerate}
\end{proposition}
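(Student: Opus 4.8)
The plan is to verify each of the four items directly from the definition $\mathcal{E}(P,Q) := \{ f \in \G \mid f_\sharp P = f_\sharp Q\}$, invoking the push-forward calculus proposition where convenient. None of the four claims should present a genuine obstacle; the whole point of the proposition is that these facts are immediate, so the proof is a short sequence of one-line arguments.

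For item (i), I would simply observe that the defining condition $f_\sharp P = f_\sharp Q$ is symmetric in $P$ and $Q$: $f \in \mathcal{E}(P,Q) \iff f_\sharp P = f_\sharp Q \iff f_\sharp Q = f_\sharp P \iff f \in \mathcal{E}(Q,P)$. For item (ii), if $P = Q$ then $f_\sharp P = f_\sharp Q$ holds for \emph{every} $f \in \G$ trivially, so $\mathcal{E}(P,Q) = \G$. For item (iii), take any constant function $f \equiv c$ for some $c \in \R^p$; then $f_\sharp \mu = \delta_c$ for any $\mu \in \mathcal{P}(\R^d)$ (in particular $f_\sharp P = \delta_c = f_\sharp Q$), so every constant function lies in $\mathcal{E}(P,Q)$, which is therefore nonempty. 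For item (iv), let $f \in \mathcal{E}(P,Q)$ and $\psi : \R^p \to \R^p$ measurable; by the composition rule (item (iv) of the basic push-forward calculus proposition), $(\psi \circ f)_\sharp P = \psi_\sharp(f_\sharp P) = \psi_\sharp(f_\sharp Q) = (\psi \circ f)_\sharp Q$, so $\psi \circ f \in \mathcal{E}(P,Q)$.

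For the footnote claim that $\mathcal{E}(P,Q)$ is path connected, I would note that given any $f \in \mathcal{E}(P,Q)$ and a fixed constant $c \in \R^p$, the path $t \mapsto \psi_t \circ f$ where $\psi_t(y) := (1-t) y + t c$ connects $f$ (at $t=0$) to the constant map $c$ (at $t=1$); since each $\psi_t$ is measurable (indeed continuous), item (iv) guarantees $\psi_t \circ f \in \mathcal{E}(P,Q)$ for all $t \in [0,1]$, and continuity of $t \mapsto \psi_t \circ f(x)$ for each $x$ makes this a path in a suitable sense (e.g.\ pointwise, or in $L^2(P+Q)$ when moments permit). Any two elements are thus joined through a common constant map, yielding path connectedness.

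If there is any subtlety at all, it is purely a matter of being careful about the topology/notion of ``path'' intended in the footnote (pointwise convergence of functions versus an $L^p$ metric versus convergence of the push-forwards), but this is a bookkeeping remark rather than a mathematical difficulty, and the four numbered items themselves require nothing beyond unwinding definitions and citing the composition identity already established.
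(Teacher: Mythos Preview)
Your proof is correct. The paper itself does not supply a proof for this proposition (it is introduced as ``trivial facts'' and stated without a proof environment), so there is nothing to compare against; your line-by-line verification from the definition and the composition rule is exactly the intended justification, and your argument for the path-connectedness footnote via $\psi_t(y)=(1-t)y+tc$ is the natural one.
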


As a preliminary analysis, let us resolve the convexity of $\mathcal{E}(P,Q)$ for simple discrete measures $P$ and $Q$.
\begin{example}[simple equalizing maps]\label{ex:equalizing}
We provide three examples:
\begin{itemize}
    \item[(i)] Let $P := \frac{1}{2} \delta_{x_1} + \frac{1}{2} \delta_{x_2}$ and $Q := \frac{1}{2} \delta_{y_1} + \frac{1}{2} \delta_{y_2}$ for distinct $x_1,x_2,y_1,y_2 \in \R^d$. We set two distinct $z_1,z_2 \in \R^p$ and write $R := \frac{1}{2} \delta_{z_1} + \frac{1}{2} \delta_{z_2}$. Then, we define $f : \R^d \to \R^p$ such that $f(x_1) = f(y_1) = z_1$ and $f(x_2)=f(y_2)=z_2$. Similarly, we define $g : \R^d \to \R^p$ such that $g(x_1) = g(y_2) = z_1$ and $g(x_2)=g(y_1)=z_2$. Note that $f_\sharp P = f_\sharp Q = g_\sharp P = g_\sharp Q = R$. Moreover, $(\frac{1}{2} f + \frac{1}{2} g)_\sharp P = R$ whereas $(\frac{1}{2} f + \frac{1}{2} g)_\sharp Q \neq R$ since $(\frac{1}{2} f + \frac{1}{2} g)(y_1) = \frac{z_1+z_2}{2} \notin \{z_1,z_2\}$. Therefore, $\mathcal{E}(P,Q)$ is not convex.
    \item[(ii)] Let $P := \frac{1}{2} \delta_{x_1} + \frac{1}{2} \delta_{x_2}$ and $Q := \frac{1}{3} \delta_{y_1} + \frac{2}{3} \delta_{y_2}$ for distinct $x_1,x_2,y_1,y_2 \in \R^d$. Note that for any $f \in \mathcal{E}(P,Q)$, the push-forward of $P$ and $Q$ is supported by one or two points. The one-point case corresponds to the almost-everywhere-constant functions. However, due to incompatible masses, there is no output measure $R$ supported by two points such that $f_\sharp P = f_\sharp Q = R$. Therefore, $\mathcal{E}(P,Q)$ narrows down to the functions that are constant $P+Q$-almost everywhere, which is a convex set.
    \item[(iii)] Let $P := \frac{1}{3} \delta_{x_1} + \frac{2}{3} \delta_{x_2}$ and $Q := \frac{1}{3} \delta_{y_1} + \frac{2}{3} \delta_{y_2}$ for distinct $x_1,x_2,y_1,y_2 \in \R^d$. As in (ii), for any $f \in \mathcal{E}(P,Q)$, the push-forward of $P$ and $Q$ are supported by one or two points, with the one-point case corresponding to the almost-everywhere-constant functions. In the two-point case, the output measure necessarily has the form $R := \frac{1}{3} \delta_{z_1} + \frac{2}{3} \delta_{z_2}$ for two distinct $z_1,z_2 \in \R^p$, and the admissible $f$ are constrained to send $\{x_1,y_1\}$ to $\{z_1\}$ and $\{x_2,y_2\}$ to $\{z_2\}$. Checking the different convex combinations shows that $\mathcal{E}(P,Q)$ is convex but not reduced to constant functions.
\end{itemize}
\end{example}

These examples do not highlight a universal classification as explicit as the \say{trivial versus nonconvex} from \cref{cor:transport}, which focused on transport maps. Nevertheless, we can identify sharp conditions on the (non)convexity of the set of equalizing maps for specific classes of probability measures $P$ and $Q$. The proposition below fully determines the convexity of $\mathcal{E}(P,Q)$ when $P,Q$ are both atomless.

\begin{proposition}[equalizing maps between continuous measures]\label{prop:equalizer}
Let $P,Q \in \mathcal{P}(\R^d)$ be two continuous probability measures such that $P \neq Q$. Then, $\mathcal{E}(P,Q)$ is not convex.
\end{proposition}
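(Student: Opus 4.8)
The plan is to disprove convexity by exhibiting $f,g\in\mathcal E(P,Q)$ whose average $\tfrac12 f+\tfrac12 g$ fails to equalize $P$ and $Q$. The construction generalizes \cref{ex:equalizing}(i): the maps $f$ and $g$ will each take only two values $z_1\neq z_2\in\R^p$, they will agree (with the same value) on a region that $P$ and $Q$ weigh identically, and they will swap $z_1\leftrightarrow z_2$ on a region where the masses of $P$ and $Q$ differ; their average then sends that whole region to the midpoint $w:=\tfrac12(z_1+z_2)$, a third point that $P$ and $Q$ still weigh differently.

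The only place absolute continuity enters is to guarantee that the finite signed measure $\lambda:=P-Q$ is atomless: since $\lambda\ll\ell_d$, an intermediate-value argument (slicing along $\{x_1\le t\}$ separately inside $\{\mathrm d\lambda/\mathrm d\ell_d\ge 0\}$ and $\{\mathrm d\lambda/\mathrm d\ell_d<0\}$, or invoking Lyapunov's convexity theorem) shows that for every Borel set $S$ the collection $\{\lambda(B):B\subseteq S\text{ Borel}\}$ is an interval containing $0$ and $\lambda(S)$, hence also their midpoint $\tfrac12\lambda(S)$. Since $P\neq Q$, I fix a Borel set $E$ with $\lambda(E)\neq 0$ and set $W:=E$, $V:=\R^d\setminus E$; because $\lambda(\R^d)=0$ we get $\lambda(V)=-\lambda(W)\neq 0$. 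Applying the splitting fact twice, I choose Borel partitions $W=W_1\sqcup W_2$ with $\lambda(W_1)=\lambda(W_2)=\tfrac12\lambda(W)$ and $V=V_1\sqcup V_2$ with $\lambda(V_1)=\lambda(V_2)=-\tfrac12\lambda(W)$ (legitimate since $-\tfrac12\lambda(W)$ lies between $0$ and $\lambda(V)$).

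Then I fix distinct $z_1,z_2\in\R^p$ and define $f:=z_1$ on $V_1\cup W_1$, $f:=z_2$ on $V_2\cup W_2$, and $g:=z_1$ on $V_1\cup W_2$, $g:=z_2$ on $V_2\cup W_1$. By the push-forward calculus, $f_\sharp P$ and $f_\sharp Q$ are measures on $\{z_1,z_2\}$ that coincide if and only if $P(V_1\cup W_1)=Q(V_1\cup W_1)$, i.e. $\lambda(V_1)+\lambda(W_1)=0$, which holds by construction; the identical check gives $g\in\mathcal E(P,Q)$. For $h:=\tfrac12 f+\tfrac12 g$: on $V_1$ both maps equal $z_1$, on $V_2$ both equal $z_2$, while on all of $W$ the two maps disagree so $h\equiv w$, which differs from $z_1$ and $z_2$. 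Hence $h^{-1}(\{z_1\})=V_1$, so $h_\sharp P(\{z_1\})-h_\sharp Q(\{z_1\})=P(V_1)-Q(V_1)=\lambda(V_1)=-\tfrac12\lambda(W)\neq 0$, whence $h_\sharp P\neq h_\sharp Q$ and $\mathcal E(P,Q)$ is not convex.

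The delicate point is selecting the four regions so that the two linear constraints forcing $f,g\in\mathcal E(P,Q)$ — namely $\lambda(V_1\cup W_1)=0$ and $\lambda(V_1\cup W_2)=0$ — are simultaneously satisfiable and yet still force $\lambda(V_1)\neq 0$, which is exactly what breaks equalization after averaging; this is why the discrepancy carried by $W$ must be split into two equal halves. The rest is routine: the only lemma requiring care is the measure-splitting fact, everything else being bookkeeping with push-forwards. (The argument uses only that $P\neq Q$ and that $P-Q$ is an atomless signed measure, so it extends verbatim to continuous $P$ and $Q$.)
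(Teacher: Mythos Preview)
Your proof is correct and takes a genuinely different route from the paper's. The paper first reduces to the case of disjoint supports by subtracting the common mass $I$ with density $\min\{\varphi_P,\varphi_Q\}$, and in that case builds the two-valued maps $f,g$ by composing Borel isomorphisms $T_P,T_Q$ onto $U[0,1]$ (via \cite[Theorem 17.41]{kechris2012classical}) with the indicator of $[0,1/2)$. You instead work directly with the atomless signed measure $\lambda=P-Q$ and obtain the four regions $V_1,V_2,W_1,W_2$ by a Lyapunov/Sierpi\'nski halving argument, so no case split on overlapping supports and no appeal to Borel-isomorphism machinery are needed. This buys you a shorter, unified argument that, as you note, only uses that $\lambda$ is atomless---hence it extends verbatim to merely continuous $P,Q$, whereas the paper's second step (writing $P',Q'$ via densities) genuinely uses absolute continuity. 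The paper's approach, on the other hand, yields more explicit maps (through the $T_P,T_Q$) and makes the link to \cref{ex:equalizing}(i) completely transparent; both constructions land on the same two-valued-map idea, only the way the domain is partitioned differs.
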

Up to a subtlety when the supports of $P$ and $Q$ intersect each other, the key idea amounts to equally dividing the mass in $P$ and $Q$ to recover the same configuration as the (i) case from \cref{ex:equalizing}.
\begin{proof}[\cref{prop:equalizer}]
We denote by $\varphi_P$ and $\varphi_Q$ two density functions of, respectively, $P$ and $Q$ with respect to $P+Q$. Then, we define $\S_P := \{x \in \R^d \mid \varphi_P(x)>0\}$ and $\S_Q := \{x \in \R^d \mid \varphi_Q(x)>0\}$. In a first time, we assume that $\S_P \cap \S_Q = \emptyset$. Let $U \in \mathcal{P}(\R)$ be the uniform measure on $[0,1]$. Since $P$ and $Q$ are continuous, there exist two measurable maps $T_P$ and $T_Q$ such that ${T_P}_\sharp P = {T_Q}_\sharp Q = U$ as a consequence of \cite[Theorem 17.41]{kechris2012classical}. Next, we set $z,z \in \R^p$ such that $z \neq z'$ and define two functions: $\psi_1 : \R \to \R^p$ by $\psi_1(x) := \mathbf{1}_{\{x < 1/2\}} z + \mathbf{1}_{\{x \geq 1/2\}} z'$ and $\psi_2 : \R \to \R^p$ by $\psi_2(x) := \mathbf{1}_{\{x < 1/2\}} z' + \mathbf{1}_{\{x \geq 1/2\}} z$. Observe that ${\psi_1}_\sharp U = {\psi_2}_\sharp U = \frac{1}{2} \delta_z + \frac{1}{2} \delta_{z'}$. Finally, using the fact that $\S_P \cap \S_Q = \emptyset$, we define $f, g \in \G$ as,
\[
f(x) = \begin{cases}
		\psi_1 \circ T_P(x) & \text{if $x \in \S_P$},\\
            \psi_1 \circ T_Q(x) & \text{if $x \in \S_Q$},\\
            0 & \text{otherwise},
		 \end{cases}
\ \ \ \ \text{and} \ \ \ \
g(x) = \begin{cases}
		\psi_1 \circ T_P & \text{if $x \in \S_P$},\\
            \psi_2 \circ T_Q, & \text{if $x \in \S_Q$},\\
            0 & \text{otherwise}.
		 \end{cases}
\]
They satisfy $f_\sharp P = f_\sharp Q = g_\sharp P = g_\sharp Q$, and $(\frac{1}{2}f+\frac{1}{2}g)_\sharp P = f_\sharp P = \frac{1}{2} \delta_z + \frac{1}{2} \delta_{z'}$. Moreover, for any $x \in \S_Q$, $(\frac{1}{2}f+\frac{1}{2}g)(x) = \frac{z+z'}{2} \notin \{ z,z'\}$. Therefore, $(\frac{1}{2}f+\frac{1}{2}g)_\sharp Q \neq (\frac{1}{2}f+\frac{1}{2}g)_\sharp P$ and $\mathcal{E}(P,Q)$ is not convex.

In a second time, we address the more general case where possibly $\S_P \cap \S_Q \neq \emptyset$. We define three additional measures: $\min(P,Q) \in \M^+(\R^d)$ with density $\min\{\varphi_P,\varphi_Q\}$ with respect to $P+Q$, $P':=P-\min(P,Q) \in \M^+(\R^d)$ and $Q':= Q-\min(P,Q) \in \M(\R^d)$. Note that $P'$ and $Q'$ have the same total mass and are still continuous. More precisely, $P'(\R^d) = P(\R^d) - \min(P,Q)(\R^d) = 1 - \min(P,Q)(\R^d) = Q(\R^d) - \min(P,Q)(\R^d) = Q'(\R^d) > 0$. We write $\gamma := P'(\R^d) = Q'(\R^d)$, so that $P',Q' \in \M^+_\gamma(\R^d)$. They admit $\varphi_{P'} := \left(\varphi_P - \varphi_Q\right) \mathbf{1}_{\{\varphi_P-\varphi_Q>0\}}$ and $\varphi_{Q'} := \left(\varphi_Q - \varphi_P\right) \mathbf{1}_{\{\varphi_P-\varphi_Q<0\}}$ as respective densities with respect to $P+Q$. Critically, these densities are positive on disjoint sets. Therefore, we know by the previous case that there exist $f$ and $g$ such that $f_\sharp P' = f_\sharp Q' = g_\sharp P' = g_\sharp Q'$ and $(\frac{1}{2}f+\frac{1}{2}g)_\sharp P' \neq (\frac{1}{2}f+\frac{1}{2}g)_\sharp Q'$. Let us now show the nonconvexity of $\mathcal{E}(P,Q)$. First, $f_\sharp P = f_\sharp \min(P,Q) + f_\sharp P'= f_\sharp \min(P,Q) + f_\sharp Q' = f_\sharp Q$ and by a similar computation $g_\sharp P = g_\sharp Q$. Hence, $f,g \in \mathcal{E}(P,Q)$. Second,
\begin{align*}
    \left(\frac{1}{2}f+\frac{1}{2}g\right)_\sharp P &= \left(\frac{1}{2}f+\frac{1}{2}g\right)_\sharp \min(P,Q) + \left(\frac{1}{2}f+\frac{1}{2}g\right)_\sharp P',\\
    \left(\frac{1}{2}f+\frac{1}{2}g\right)_\sharp Q &= \left(\frac{1}{2}f+\frac{1}{2}g\right)_\sharp \min(P,Q)+ \left(\frac{1}{2}f+\frac{1}{2}g\right)_\sharp Q'.
\end{align*}
It follows from $(\frac{1}{2}f+\frac{1}{2}g)_\sharp P' \neq (\frac{1}{2}f+\frac{1}{2}g)_\sharp Q'$ that $(\frac{1}{2}f+\frac{1}{2}g)_\sharp P \neq (\frac{1}{2}f+\frac{1}{2}g)_\sharp Q$. Hence, $(\frac{1}{2}f+\frac{1}{2}g) \notin \mathcal{E}(P,Q)$. Consequently, $\mathcal{E}(P,Q)$ is not convex.
\end{proof}

\begin{remark}[Not in the published version]
In the published version, \Cref{prop:equalizer} addresses the more restricted case where $P$ and $Q$ are \emph{absolutely continuous with respect to $\ell_d$}. We noticed afterward that the proof could be generalized to the continuous case by using density functions with respect to $P+Q$. We recall that for every $P,Q \in \mathcal{P}(\R^d)$, $P,Q \ll P+Q$.
\end{remark}

Additionally, in the important case where $P$ and $Q$ are both finitely supported, we can completely characterize the convexity of $\mathcal{E}(P,Q)$. For the sake of simplicity and concision, we defer this result to \cref{sec:stump} (see \cref{thm:discreteCharacConv}) since the obtained conditions are fairly intricate and involved notationwise. Basically, it asserts that there is no universal convexity of $\mathcal{E}(P,Q)$ in this discrete setting (it depends on the location, number, and probability weights of the points). A significant implication of this characterization concerns empirical measures, as made precise by the following proposition. Recall that two integers are \emph{coprime} if the only positive integer that divides both of them is 1.

\begin{proposition}[equalizing maps between uniform finitely supported measures]\label{prop:empirical_equalizing}
Let $n,m \geq 1$ be two integers and the sets $\{x_i\}^n_{i=1},\{y_j\}^m_{j=1} \subset \R^d$ be both composed of two-by-two distinct elements such that $\{x_i\}^n_{i=1} \cap \{y_j\}^m_{j=1} = \emptyset$. If $P := \frac{1}{n} \sum_{i=1}^n \delta_{x_i}$ and $Q := \frac{1}{m} \sum_{j=1}^m \delta_{y_j}$, then
\begin{itemize}
\item[(i)] if $m$ and $n$ are coprime, then $\mathcal{E}(P,Q)$ is the set of functions in $\G$ whose restrictions on $\{x_i\}_{i=1}^n \cup \{y_j\}_{j=1}^m$ are constant, which is a convex set;
\item[(ii)] if $m$ and $n$ are not coprime, then $\mathcal{E}(P,Q)$ is not convex.
\end{itemize}
\end{proposition}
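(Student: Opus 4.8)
The plan is to handle the two items separately: item~(i) by a short divisibility argument on the masses of the atoms of the common push-forward, and item~(ii) by an explicit counterexample built in the spirit of the $(i)$-case of \cref{ex:equalizing}. (One could also deduce the statement from the general characterization \cref{thm:discreteCharacConv}, but the direct route is quick.) For (i), fix $f \in \mathcal{E}(P,Q)$ and set $R := f_\sharp P = f_\sharp Q \in \mathcal{P}(\R^p)$. Since $P$ is uniform on the $x_i$'s, every atom $z$ of $R$ satisfies $R(\{z\}) = a_z/n$ with $a_z := \#\{i \in [n] \mid f(x_i)=z\} \geq 1$ and $\sum_z a_z = n$; likewise $R(\{z\}) = b_z/m$ with $b_z := \#\{j \in [m] \mid f(y_j)=z\}$. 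Hence $a_z m = b_z n$ for each atom $z$, and coprimality of $n$ and $m$ forces $n \mid a_z$; combined with $a_z \geq 1$ and $\sum_z a_z = n$, this leaves exactly one atom, so $R = \delta_z$ for some $z$, i.e. $f(x_i)=z$ for all $i$ and $f(y_j)=z$ for all $j$, meaning $f$ is constant on $\{x_i\}_{i=1}^n \cup \{y_j\}_{j=1}^m$. The converse inclusion is immediate, since a function equal to a constant $z$ on that set pushes both $P$ and $Q$ to $\delta_z$; and the set of functions of $\G$ with constant restriction to a fixed finite set is convex because an affine combination of two constants is again constant.

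For (ii), let $g := \gcd(n,m) \geq 2$ and write $n = g n'$, $m = g m'$ with $n',m' \geq 1$. Choose $z_1,\dots,z_g \in \R^p$ in general position, say $z_k := (k-1)v$ for a fixed unit vector $v$, so that the $z_k$ are distinct and $\bar z := \tfrac12(z_1+z_2) \notin \{z_1,\dots,z_g\}$. Partition $\{x_i\}_{i=1}^n$ into blocks $A_1,\dots,A_g$ of size $n'$ and $\{y_j\}_{j=1}^m$ into blocks $B_1,\dots,B_g$ of size $m'$; the disjointness assumption $\{x_i\}_{i=1}^n \cap \{y_j\}_{j=1}^m = \emptyset$ guarantees that the piecewise definitions below are consistent. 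Let $f \in \G$ send every point of $A_k \cup B_k$ to $z_k$ (and everything else to $z_1$, say), and let $h \in \G$ agree with $f$ on $\{x_i\}_{i=1}^n$ and on $B_3 \cup \dots \cup B_g$, but send $B_1$ to $z_2$ and $B_2$ to $z_1$. Counting masses (each block carries mass $n'/n = m'/m = 1/g$) gives $f_\sharp P = f_\sharp Q = h_\sharp P = h_\sharp Q = R$, where $R := \tfrac1g \sum_{k=1}^g \delta_{z_k}$, so $f,h \in \mathcal{E}(P,Q)$.

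It remains to see that $w := \tfrac12 f + \tfrac12 h$ leaves $\mathcal{E}(P,Q)$. On each $x_i$ we have $w(x_i) = f(x_i)$, hence $w_\sharp P = f_\sharp P = R$. On the other hand $w$ maps every point of $B_1 \cup B_2$ to $\bar z$ and every point of $B_k$ with $k \geq 3$ to $z_k$, so $w_\sharp Q = \tfrac{2}{g}\delta_{\bar z} + \tfrac1g \sum_{k=3}^g \delta_{z_k}$. Since $\bar z \neq z_1$ and $z_1 \notin \{z_3,\dots,z_g\}$, we get $w_\sharp Q(\{z_1\}) = 0 < \tfrac1g = R(\{z_1\}) = w_\sharp P(\{z_1\})$, whence $w_\sharp P \neq w_\sharp Q$ and $w \notin \mathcal{E}(P,Q)$; therefore $\mathcal{E}(P,Q)$ is not convex.

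I expect the only genuine subtlety to be producing a single construction valid for all admissible $n,m$ with $\gcd(n,m) \geq 2$ — these need be neither even nor equal — which is exactly what the $\gcd$-sized block partition delivers; the analogous point in item~(i) is merely the observation that every atom of the push-forward of a uniform empirical measure on $n$ points has mass a positive integer multiple of $1/n$, so that the coprimality constraint collapses the target to a single Dirac.
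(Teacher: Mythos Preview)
Your proof is correct. For item~(i), your argument is essentially the paper's: the paper phrases it as showing $S_\alpha \cap S_\beta = \{0,1\}$, but this is exactly your observation that any atom of $f_\sharp P = f_\sharp Q$ has mass $a_z/n = b_z/m$ with $n \mid a_z$ by coprimality, forcing a single atom.

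For item~(ii), however, you take a genuinely different route. The paper does not build an explicit counterexample; instead it invokes the general characterization \cref{thm:discreteCharacConv} and observes that its condition~(i) fails, since $1/r \in S_{\alpha,\beta}$ (with $r = \gcd(n,m)$) admits many representing index sets $I \in 2^{[n]}$ of size $n/r$. Your approach constructs the witness directly: a $\gcd$-sized block partition of both supports, with $h$ obtained from $f$ by swapping the $Q$-targets of the first two blocks. This is more elementary and fully self-contained---it does not depend on the somewhat intricate \cref{thm:discreteCharacConv}---while the paper's approach has the advantage of illustrating how the general theorem specializes. Your construction is in fact the natural ``unwinding'' of the paper's abstract failure of uniqueness into the concrete $f,g$ that the proof of \cref{thm:discreteCharacConv} would produce, and it handles the edge case $g=2$ (no blocks $B_k$ with $k\geq 3$) cleanly.
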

According to \cref{prop:empirical_equalizing}, the set of equalizing maps between empirical measures is either trivial or nonconvex. The proof, detailed in \cref{sec:stump}, consists in checking one of the necessary conditions for convexity given by \cref{thm:discreteCharacConv}. We also refer to \cref{prop:nondisjoint} to address the case with intersecting supports.

All in all, while the possible shapes of the equalizing constraint are richer than the ones of the transport constraint, \cref{prop:equalizer,prop:empirical_equalizing} highlight that convexity remains rare and sometimes occurs only because the equalizing maps are constant (that is, trivial). In the rest of the article, we discuss the consequences of these results on machine-learning problems.

\section{Application to machine learning}\label{sec:learning}

This section illustrates the role of the sets $\T(P,Q)$ and $\mathcal{E}(P,Q)$ in popular machine-learning problems and highlights the consequences of their (non)convexity.

\subsection{Learning and convexity}\label{sec:convexity}
First, we introduce unified formulations of learning problems involving a specific condition on the models. Most machine-learning problems amount to minimizing a numerical criteria (e.g., prediction accuracy) under a constraint or a penalty (e.g., sparsity). Formally, let $\F \subseteq \G$ be a set of \emph{base} models (e.g., neural networks with a fixed architecture), and $\C \subseteq \G$ be a \emph{constraint}. In this case $\F \cap \C$ represents the set of \emph{admissible} or \emph{feasible} models. In optimization and learning problems, a loss serves to quantify the deviation of a model to some constraint. For the sake of clarity, we use the following terminology in the rest of the paper.
\begin{definition}[$\C$-loss]
Let $\C$ be a subset of $\G$. A function $\L : \G \to [0,+\infty]$ is a \emph{$\C$-loss} if for any $f \in \G$, $\L(f)=0 \iff f \in \C$.
\end{definition}
For a given loss function $\L : \G \to [0,+\infty]$, a learning problem including $\C$ in its objective fits either the \emph{constrained} optimization problem
\begin{equation}\label{eq:constrained}
    \min_{f \in \F \cap \C} \L(f)
\end{equation}
or the \emph{penalized} optimization problem
\begin{equation}\label{eq:penalized}
    \min_{f \in \F} \L(f) + \lambda \L_\C(f),
\end{equation}
where $\lambda>0$ governs a trade-off between $\L$ and a $\C$-loss $\L_\C$.

A \emph{minimization problem} is convex if both the objective to minimize and the set of feasible solutions are convex. Researchers and practitioners generally endeavor to design convex optimization problems for essentially two reasons. From an optimization viewpoint, there exist efficient numerical procedures to solve convex programs \cite{bubeck2015convex}. From a statistical viewpoint, the minimizers of empirical convex objectives often enjoy asymptotic and nonasymptotic guarantees \citep{haberman1989concavity,pollard1991asymptotics,niemiro1992asymptotics,arcones1998asymptotic,bartlett2006convexity}. On the basis on the above generic formulations, we can easily identify sufficient conditions for convexity in learning problems.
\begin{itemize}
    \item Problem~\cref{eq:constrained} is convex if $\C$ and $\F$ are convex sets (since convexity is stable under intersection) and if $\L$ is a convex function.
    \item Problem~\cref{eq:penalized} is convex if $\F$ is a convex set and if $\L$ and $\L_\C$ are convex functions.
\end{itemize}
Let us underline the specific role of $\C$, as we aim at studying examples where $\C$ is a set of transport maps or equalizing maps. The influence of $\C$ is straightforward in \cref{eq:constrained}, while it depends on a $\C$-loss $\L_\C$ in \cref{eq:penalized}. Notably, a $\C$-loss is obviously not uniquely determined by $\C$, which makes the choice of $\L_\C$ crucial to attaining convexity in \cref{eq:penalized}. But we critically emphasize that designing a convex $\C$-loss is sometimes impossible, as a consequence of a classical result of convex analysis.
\begin{theorem}[no convex loss for nonconvex constraints]\label{thm:impossible}
If $\L : \G \to [0,+\infty]$ is a convex function, then $\L^{-1}(\{0\})$ is a convex set. Therefore, if $\C \subseteq \G$ is nonconvex, then there exist no convex $\C$-loss.
\end{theorem}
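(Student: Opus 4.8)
The plan is to prove the two assertions in sequence, as the second is an immediate consequence of the first applied to the contrapositive. First I would establish that if $\L : \G \to [0,+\infty]$ is convex, then its zero set $\L^{-1}(\{0\})$ is convex. Take any $u,v \in \L^{-1}(\{0\})$ and any $0<t<1$. By convexity of $\L$ on $\G$ (which is indeed a convex set, being a vector space of functions under pointwise operations), we have $\L((1-t)u + tv) \leq (1-t)\L(u) + t\L(v) = (1-t)\cdot 0 + t \cdot 0 = 0$. Since $\L$ takes values in $[0,+\infty]$, this forces $\L((1-t)u+tv) = 0$, i.e.\ $(1-t)u+tv \in \L^{-1}(\{0\})$. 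Hence $\L^{-1}(\{0\})$ is convex.

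Next I would derive the second statement by contraposition. Suppose $\C \subseteq \G$ is nonconvex but, for the sake of contradiction, that there exists a convex $\C$-loss $\L$. By definition of a $\C$-loss, $\L^{-1}(\{0\}) = \{ f \in \G \mid \L(f) = 0 \} = \C$. But the first part of the theorem shows $\L^{-1}(\{0\})$ is convex, so $\C$ is convex, contradicting the hypothesis. Therefore no convex $\C$-loss can exist.

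There is essentially no obstacle here: the argument is a two-line manipulation of the definition of convexity, and the only mild subtlety is bookkeeping the extended-value codomain $[0,+\infty]$ — one should note that convex combinations of the value $0$ stay at $0$ and that nonnegativity of $\L$ is what upgrades the inequality $\L((1-t)u+tv)\leq 0$ to an equality. I would also remark in passing that $\G$ itself is convex, so that the phrase ``convex function on $\G$'' in \Cref{thm:impossible} is meaningful and the convex combination $(1-t)u+tv$ indeed lies in the domain. This is the only point worth stating explicitly; the rest is immediate.
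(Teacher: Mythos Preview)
Your proof is correct and follows essentially the same approach as the paper: bound $\L((1-t)u+tv)$ above by the convex combination and below by $0$, conclude equality, then deduce the second assertion by contraposition using the definition of a $\C$-loss. The only additions are your explicit remarks that $\G$ is convex and that the extended codomain causes no trouble, which are harmless clarifications.
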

\begin{proof}[\cref{thm:impossible}]
For any $f,g \in \F$ and $0<t<1$, we have by convexity and nonnegativity
\[
0 \leq \L((1-t)f + tg) \leq (1-t)\L(f) + t \L(g).
\]
Therefore, if $f$ and $g$ belong to $\L^{-1}(\{0\})$, that is, $\L(f)=\L(g)=0$, this inequality entails $\L((1-t)f + tg)=0$, that is, $(1-t)f + tg \in \L^{-1}(\{0\})$, which means that $\L^{-1}(\{0\})$ is convex.
\end{proof}
This furnishes a simple criterion to check whether a condition $\C$ can(not) be quantified by a convex loss: it suffices to verify that $\C$ itself is (not) a convex set. The consequences of this result in machine learning are significant and perhaps not well appreciated. It means that it can be in vain to look for a convex penalty.

Note that several frameworks focus on \emph{approximate} constraints rather than exact ones. For $\L_\C$ a relevant $\C$-loss and $\varepsilon \geq 0$ an approximation level, this consists in replacing $\C$ by $\C_{\L_{\C},\varepsilon} := \{f \in \G \mid \L_\C(f) \leq \varepsilon\} = \L^{-1}_\C([0,\varepsilon])$ in Problem~\cref{eq:constrained}. Crucially, we highlight that this alternative essentially does not change the (non)convexity of the problem. If $\C$ is convex, then there exists a convex $\C$-loss $\L_\C$ (like $\L_\C(f) := \operatorname{dist}(f,\C)$), and hence $\C_{\L_{\C},\varepsilon}$ is convex by being the sublevel set of a convex function. The case of a nonconvex $\C$ is not as straightforward. While there exist no convex $\C$-loss due to \cref{thm:impossible}, $\C_{\L_{\C},\varepsilon}$ could be convex for some $\varepsilon$ and $\L_\C$. However, the next proposition shows that whatever the loss function, the approximation level $\varepsilon$ must be beyond a threshold---that depends on $\C$ and $\L_\C$---to \emph{potentially} recover convexity. 
\begin{proposition}[nonconvexity of the approximations of nonconvex constraints]\label{prop:approximate} Let $\C$ be a nonconvex subset of $\G$. Then, for any $\C$-loss $\L$ there exists $\tau > 0$ such that for every $\varepsilon \in [0,\tau)$, $\L^{-1}([0,\varepsilon])$ is not convex.
\end{proposition}
\begin{proof}[\cref{prop:approximate}]
Let us assume \emph{ad absurdum} that there exists a $\C$-loss $\L$ such that for every $\tau > 0$ there exists $\varepsilon$ such that $\L^{-1}([0,\varepsilon])$ is convex. Therefore, for any integer $n \geq 1$ there exists $\varepsilon_n \in [0,1/n)$ such that $\L^{-1}([0,\varepsilon_n])$ is convex. Next, we show that $\L^{-1}(\{0\}) = \bigcap_{n \geq 1} \L^{-1}([0,\varepsilon_n])$. Clearly, $\L^{-1}(\{0\}) \subseteq \bigcap_{n \geq 1} \L^{-1}([0,\varepsilon_n])$. To prove the converse inclusion, let $f \in \bigcap_{n \geq 1} \L^{-1}([0,\varepsilon_n])$, which means that for every $n \geq 1$, $0 \leq \L(f) \leq \varepsilon_n$. Because $\lim_{n \to +\infty} \varepsilon_n = 0$, taking the limit in this inequality leads to $\L(f) = 0$, that is, $f \in \L^{-1}(\{0\})$. To conclude, note that $\bigcap_{n \geq 1} \L^{-1}([0,\varepsilon_n])$ is convex by the intersection of convex sets. Thereby, $\L^{-1}(\{0\})$ is convex, which contradicts the nonconvexity of $\C$.
\end{proof}
Therefore, given a relevant $\C$-loss $\L_\C$, the set of $\varepsilon$ such that $\C_{\L_{\C},\varepsilon}$ is convex could be empty or could range beyond a high threshold. Moreover, it may be difficult to determine and estimate this set in practice. As such, taking a sufficiently large $\varepsilon$ could simply be pointless or could excessively degrade the approximation quality.

All in all, the sufficient conditions for convexity along with \cref{thm:impossible} entail that one cannot guarantee the convexity of \cref{eq:constrained} and \cref{eq:penalized} as soon as $\C$ is not convex. This limitation extends to approximate constraints due to \cref{prop:approximate}. Next, we advance from general constraints $\C$ to specifically transport maps and equalizing maps.

\subsection{Learning under push-forward constraints or penalties}
If $\C:=\T(P,Q)$ or $\C:=\mathcal{E}(P,Q)$, then convexity depends on $(P,Q)$ according to \cref{sec:shape}. Notably, such constraints are not universally convex (i.e., convex whatever the input measures) and even actually nonconvex in many classical scenarios. In particular, this (informally) signifies through \cref{thm:impossible} that \emph{there exist no losses $\L_{\T(P,Q)}$ and $\L_{\mathcal{E}(P,Q)}$ that are convex for all measures $P$ and $Q$}, in contrast to (for instance) the mean square error, which is convex regardless of the data distribution. This is a strong limitation on the design of convex learning problems, since in typical scenarios the measures are exogenous factors. Moreover, being able to certify the convexity of $\C:=\T(P,Q)$ or, respectively, $\C:=\mathcal{E}(P,Q)$, does not even mean being able to construct a workable convex $\C$-loss for this specific case. Of course, if $\C$ is convex, then $\L_\C(f) := \operatorname{dist}(f,\C)$ or $\L_\C(f) := \mathbf{1}_{\G \setminus \C}(f)$ define convex $\C$-losses, but they require knowing $\C$ \emph{explicitly} to be computed while the role of a loss is precisely to \emph{implicitly} quantify a condition. 

In what follows, we illustrate that this setting applies to generative modeling and group-level algorithmic fairness, explaining why such problems are generally not provably convex.

\subsubsection{Generative modeling}\label{sec:generative}

Let $P \in \mathcal{P}(\R^d)$ be an source probability distribution, $Q \in \mathcal{P}(\R^p)$ be a target probability distribution with bounded support, and $D : \mathcal{P}(\R^p) \times \mathcal{P}(\R^p) \to [0,+\infty]$ be a discrepancy function between probability measures. Finding a push-forward generative model for $Q$ from $P$ amounts to finding a model $f \in \F$ such that $f_\sharp P \approx Q$, which can be achieved by solving
\begin{equation}\label{eq:generative}
    \min_{f \in \F} D(f_\sharp P, Q).
\end{equation}
This setting notably includes \emph{generative adversarial networks} (GAN) and \emph{variational auto-encoders} (VAE). Typically, the discrepancy $D$ is chosen as the Kullback-Leibler divergence \citep{goodfellow2014generative}, the Wasserstein-1 distance \citep{arjovsky2017wasserstein}, or a Sinkhorn divergence \citep{genevay2018learning}. Critically, the formulation fits \cref{eq:penalized} with $\C := \T(P,Q)$, $\L_\C(f) := D(f_\sharp P, Q)$, and $\L(f) := 0$. \emph{Thereby, \cref{eq:generative} is not universally convex.} More precisely, there are two limits to its convexity according to the sufficient conditions discussed in \cref{sec:convexity}.

First limit: the convexity of the set of feasible models $\F$. In most cases, $\F$ is a set of neural networks with fixed architecture. Note that such a set is not necessarily convex, since in particular the sum of two neural networks is generally a neural network with different depth and widths. One could work instead with a convex class of models, like linear models, but it would sacrifice the necessary inductive power of neural networks for generative tasks.

Second limit: the convexity of $\T(P,Q)$, which only holds in restricted cases according to \cref{sec:transport}. More specifically, in the GAN scenario, the true (or population) target distribution $Q$ is continuous. Therefore, \cref{prop:continuous} along with \cref{thm:impossible} ensures that \cref{eq:generative} cannot be convex---whatever the choice of $D$, \emph{even if it is convex}. This emphasizes that the convexity of $Q^\star \mapsto D(Q^\star,Q)$ (in measure space) is \emph{radically} different from the convexity of $f \mapsto D(f_\sharp P,Q)$ (in function space).\footnote{Another striking illustration of this distinction comes from \citep[Section 7.3]{ambrosio2005gradient}, which shows that the Wasserstein-2 distance is \emph{convex} in both input measures but \emph{concave} along its geodesics.} Notably, the global convergence guarantees from the original GAN paper leverage unrealistic assumptions to reframe the GAN minimization problem as a convex program in \emph{measure space} \citep[Proposition 2]{goodfellow2014generative}. Following a more statistically driven approach, one rather solves \cref{eq:generative} between some empirical measures $P_n$ and $Q_m$, respectively corresponding to an $n$-sample from $P$ and an $m$-sample from $Q$. According to \cref{prop:empirical}, $\T(P_n,Q_m)$ is convex only if $m$ does not divide $n$. Not only this is restrictive, but this would solely guarantee the existence of a convex $\C$-loss; it would not provide a closed-form expression as previously mentioned.

Last, we shall mention that the class $\F = \{f_\theta\}_{\theta \in \Theta}$ is generally parametric, with $\Theta$ included in an Euclidean space. Therefore, in practice, one does not optimize in function space but in parameter space. Several references noticed the nonconvexity of the GAN objective in parameter space, pointing out the nonconvexity of $\theta \mapsto f_\theta$ for common neural networks \citep{nagarajan2017gradient,guo2023gans}. Our analysis highlights a more structural culprit: the nonconvexity of the push-forward operation. \emph{All in all, push-forward generative modeling suffers from strong limitations to convexity at every level, making it almost never convex.}

\subsubsection{Group-level algorithmic fairness}\label{sec:parity}

We turn to the design of fair machine-learning predictors. Let $X : \Omega \to \R^{d-1}$ be a random vector representing some covariates, and $S : \Omega \to \{0,1\}$ be a random variable encoding a binary protected status (e.g., males and females) such that $0 < \P(S=1) < 1$. A function $f \in \G$ satisfies \emph{statistical parity} (also known as \emph{demographic parity} or \emph{no disparate impact}) with respect to $S$ if $f(X,S) \independent S$ \citep{dwork2012fairness}. We define $P := \mathbb{L}((X,S) \mid S=0)$ and $Q := \mathbb{L}((X,S) \mid S=1)$ the conditional probability measures of the two protected groups. In this binary case, note that statistical parity can be framed as $\mathbb{L}(f(X,S) \mid S=0) = \mathbb{L}(f(X,S) \mid S=1)$, that is, $f_\sharp P = f_\sharp Q$, namely, $f \in \mathcal{E}(P,Q)$.

Maximizing the accuracy of the model under the statistical-parity constraint corresponds to \cref{eq:constrained} with $\L(f) := \E\left[\norm{f(X,S)-Y}^2\right]$ (or any other convex loss for accuracy) and $\C := \mathcal{E}(P,Q)$ (as in \citep{gouic2020projection,evgenii2020fair}). Obtaining a trade-off between accuracy and fairness corresponds to \cref{eq:penalized} with $\L_\C$ an $\mathcal{E}(P,Q)$-loss (as in \citep{perez2017fair,risser2022tackling}).
Most often, people work with $\L_\C(f) := D(f_\sharp P, f_\sharp Q)$, where $D$ is (the power of) a distance or divergence between probability measures.

Similar to generative modeling, the convexity of a learning process involving statistical parity has two restrictions according to \cref{sec:convexity}: the convexity of $\F$ and the one of $\mathcal{E}(P,Q)$. Regarding the former, if $\F$ is a set of neural networks with fixed architecture, then the learning is not convex; if $\F$ is the parametric set of linear predictors, then the learning could be convex (but likely less efficient on complex data). Regarding the latter, the discussion from \cref{sec:convexity} along with the results from \cref{sec:equalizer} show that the learning is rarely convex, whether it be for Lebesgue absolutely continuous population measures $P,Q$ (\cref{prop:equalizer}) or their empirical counterparts (\cref{prop:empirical_equalizing}). \emph{As such, no practitioner can design a group-fair learning problem that universally guarantees convexity.} Notably, as for \cref{sec:generative}, modifying $\L_\C$ (using, for instance, a convex $D$) is pointless. Additionally, considering an $\varepsilon$-approximate version of statistical parity comes with no reliable convexity guarantees according to \cref{prop:approximate}.

Strictly, we should take into account that the models belong to $\F$ instead of $\G$. It could happen that for relevant subclasses of models $\F$, the set of feasible solutions $\mathcal{E}(P,Q) \cap \F$ is convex. In the next proposition, we show that convexity does not universally hold for the widely used class of weight-linear predictors.
\begin{proposition}[no universal convexity of the set of linear equalizing maps]\label{prop:linear}
We define the parametric class of functions $\F_{\operatorname{Lin}} := \{ x \mapsto  \theta x + \theta_0; \theta \in \R^{p \times d}, \theta \in \R^p\}$. If $d \geq 2$, then there exist $P,Q \in \mathcal{P}(\R^d)$ such that $\mathcal{E}(P,Q) \cap \F_{\operatorname{Lin}}$ is not convex.
\end{proposition}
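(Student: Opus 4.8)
The plan is to exhibit, for $d\ge 2$, two probability measures $P,Q$ on $\R^d$ together with two affine maps $f,g\in\F_{\operatorname{Lin}}$ that are both equalizing for $(P,Q)$, but whose midpoint $\tfrac12 f+\tfrac12 g$ is not. The natural idea is to work in dimension $d=2$ (and embed into higher dimensions by ignoring the extra coordinates, or by taking product measures) and to let the two affine maps act as two different linear projections of $\R^2$ onto a line, chosen so that each individually equalizes $P$ and $Q$ but their average does not. Taking $p=1$ is enough for the counterexample, since a one-dimensional output already prevents convexity.

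Concretely, I would try the following construction. Pick four points in $\R^2$ — say $x_1,x_2$ for the support of $P$ and $y_1,y_2$ for the support of $Q$ — arranged so that they form (a suitable affine image of) the vertices of a square or parallelogram: for instance $x_1=(0,0)$, $x_2=(1,1)$, $y_1=(1,0)$, $y_2=(0,1)$. Set $P:=\tfrac12\delta_{x_1}+\tfrac12\delta_{x_2}$ and $Q:=\tfrac12\delta_{y_1}+\tfrac12\delta_{y_2}$. Now take $f(x):=e_1^\top x$ (first coordinate) and $g(x):=e_2^\top x$ (second coordinate), both of which lie in $\F_{\operatorname{Lin}}$ with $\theta_0=0$. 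One checks $f_\sharp P=\tfrac12\delta_0+\tfrac12\delta_1=f_\sharp Q$ and likewise $g_\sharp P=\tfrac12\delta_0+\tfrac12\delta_1=g_\sharp Q$, so $f,g\in\mathcal{E}(P,Q)\cap\F_{\operatorname{Lin}}$. But $h:=\tfrac12 f+\tfrac12 g$ is $x\mapsto\tfrac12(x^{(1)}+x^{(2)})$, so $h(x_1)=0$, $h(x_2)=1$, while $h(y_1)=h(y_2)=\tfrac12$; hence $h_\sharp P=\tfrac12\delta_0+\tfrac12\delta_1\ne\delta_{1/2}=h_\sharp Q$. Thus $h\notin\mathcal{E}(P,Q)$, and $\mathcal{E}(P,Q)\cap\F_{\operatorname{Lin}}$ is not convex.

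For general $d\ge 2$, I would simply place the above $P$ and $Q$ on the first two coordinates of $\R^d$ — that is, replace $x_1,x_2,y_1,y_2$ by the corresponding points of $\R^d$ with all remaining coordinates set to $0$ — and keep $f,g$ as the coordinate projections onto $x^{(1)}$ and $x^{(2)}$ respectively; the computation is unchanged. (Alternatively one could take $P':=P\otimes R$ and $Q':=Q\otimes R$ for an arbitrary $R\in\mathcal P(\R^{d-2})$, which works just as well and makes the measures full-dimensional.) This finishes the proof.

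I do not expect a serious obstacle here: the whole content is to notice that linearity of the map does not rescue convexity because the \emph{set} $\mathcal{E}(P,Q)$ is itself badly nonconvex (as already established, e.g. by \cref{ex:equalizing}(i), whose geometry this example mirrors), and that affine maps are abundant enough to realize the same obstruction. The only mild care needed is to make sure the four points are genuinely distinct and in "general position" — i.e. not collinear in a way that would force the two projections to coincide — and to verify the pushforward equalities, which are one-line finite checks. If one wants the measures to have full-dimensional support rather than finite support, the product-measure variant handles that cleanly, but it is not required by the statement.
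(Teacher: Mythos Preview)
Your proof is correct and follows essentially the same approach as the paper: both take $f$ and $g$ to be the first two coordinate projections (the paper uses $g(x)=-x_2$ instead of $g(x)=x_2$) and choose $P,Q$ so that each projection equalizes but their average does not. The only difference is presentational: you give an explicit four-point discrete construction, whereas the paper phrases the same idea via two i.i.d.\ nonconstant random variables $A,B$ with $P=\mathbb{L}(A,B,0,\ldots,0)$ and $Q=\mathbb{L}(A,A,0,\ldots,0)$---your example is precisely (up to swapping $P$ and $Q$) the case $A,B\sim\operatorname{Bernoulli}(1/2)$.
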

\begin{proof}[\cref{prop:linear}]
We define two functions $f$ and $g$ in $\F_{\operatorname{Lin}}$ by $f(x_1,\ldots,x_d) := x_1$ and $g(x_1,\ldots,x_d) := - x_2$. Set $A$ and $B$ two independent and identically distributed random variables, and let $P,Q \in \mathcal{P}(\R^d)$ be the laws of, respectively, $(A,B,0,\ldots,0)$ and $(A,A,0,\ldots,0)$. Note that $f_\sharp P = f_\sharp Q = \mathbb{L}(A)$ and $g_\sharp P = g_\sharp Q = \mathbb{L}(-A)$ since $A$ and $B$ are equal in law. Moreover, $(\frac{1}{2}f+\frac{1}{2}g)_\sharp P = \delta_0$ and $(\frac{1}{2}f+\frac{1}{2}g)_\sharp Q= \mathbb{L}\left(\frac{1}{2}B-\frac{1}{2} A\right)$, which is not equal to $\delta_0$ since $A$ and $B$ are independent and not constant. Therefore, $(\frac{1}{2}f+\frac{1}{2}g)_\sharp P \neq (\frac{1}{2}f+\frac{1}{2}g)_\sharp Q$, and $\mathcal{E}(P,Q) \cap {\F_{\operatorname{Lin}}}$ is not convex.
\end{proof}
Note that the same conclusion holds for any class $\F$ such that $\F_{\operatorname{Lin}} \subseteq \F$.

\begin{remark}[Transport-map approaches to fairness]\label{rem:fairness} The classification method of \cite{gordaliza2019obtaining} achieves statistical parity with a limited loss in accuracy by optimally modifying the input data. To do so, it transports all the protected-group distributions of covariates toward a common well-chosen reference distribution. When $S$ is binary, this corresponds to computing $f_0 \in \T(P,R)$ and $f_1 \in \T(Q,R)$ with $P := \mathbb{L}(X \mid S=0)$, $Q := \mathbb{L}(X \mid S=1)$, and $R$ the Wasserstein barycenter of $\{P,Q\}$. This problem is also nonconvex in general. In contrast to the problem detailed in \cref{sec:parity}, its nonconvexity comes from the nonconvexity of $\T(P,R)$ and $\T(Q,R)$ rather than the one of $\mathcal{E}(P,Q)$.
\end{remark}

\section{Toward recovering convexity}\label{sec:recover}

As explained in the previous section, one cannot guarantee the convexity of a learning problem involving a nonconvex condition $\C$, be it as a constraint or a penalty. This section explores two directions that one can follow to recover convexity, exemplified on problems involving transport maps or equalizing maps.

\subsection{Weakening or strengthening the condition}\label{sec:changing}

The simplest approach amounts to replacing the constraint $\C$ by a different one that still captures the same principle while being convex. Let us illustrate this idea with statistical parity. In this subsection, we assume that $p=1$, so that all models $f$ are functions from $\R^d$ to $\R$.

Basically, statistical parity requires the predictions $f(X,S)$ to be independent of the protected attribute $S$. More precisely, it leverages the probabilistic notion of independence between random variables. However, other concepts of dependence could be employed. For instance, one could simply demand $\Cov(f(X,S),S)=0$, where $\Cov$ is the covariance between two random variables. Note that with the notation of \cref{sec:parity}, it corresponds to the constraint $f \mathrm{d}P = \int f \mathrm{d}Q$. This leads to a weaker definition of fairness that still limits the dependence of $f(X,S)$ to $S$ but that is convex (even linear) in the model. This is notably what \citep{zafar2017fairness,zafar2019fairness} did to obtain a convex relaxation of statistical parity in their fair-classification frameworks. Conversely, one can also reach convexity by strengthening the definition. In contrast to statistical parity, which is a distributional (or group) definition of fairness, \emph{counterfactual fairness} focuses on input-level predictions \citep{kusner2017counterfactual}. It holds when the model produces the same outputs for every input and their counterfactual counterparts \emph{had the protected status changed}. While it became famous for addressing causality rather than mere associations, another interesting aspect explored in \citep{kusner2017counterfactual,delara2024transport} lies in the fact that (under standard assumptions) it implies statistical parity by being its input-scale counterpart. Moreover, counterfactual fairness represents a convex constraint that can be quantified through a convex loss \citep{russell2017when, delara2024transport}. It can thereby be used as a convex restriction of statistical parity.

This illustrates that weakening or strengthening the constraint $\C$ is a natural option to attain convexity. However, we point out that such a change can sometimes raise other challenges, like computing the causal model needed for counterfactual fairness.

\subsection{Radically changing the models}\label{sec:model}

A second strategy for dealing with a convex constraint consists in changing the nature of the base models $f$, that is, changing the space of models $\G$ into a radically different one. In this subsection, we focus on replacing the deterministic mapping $f$ by a random coupling. We need some notation: $\operatorname{I}_d$ refers to the identity function on $\R^d$ and for any $f \in \G$, $(\operatorname{I}_d, f)$ denotes the function $ \R^d \to \R^d \times \R^p : x \mapsto (x,f(x))$.

An illustration of this strategy comes from optimal-transport theory. Let $p=d$ and consider the Monge formulation of optimal transport between $P$ and $Q$ in $\mathcal{P}(\R^d)$:
\[
\min_{f \in \T(P,Q)} \int \norm{x-f(x)}^2 \mathrm{d}P(x).
\]
Although the loss to minimize is actually convex, the constraint $\T(P,Q)$ renders the problem nonconvex in most configurations according to \cref{cor:transport}. Interestingly, one can recover convexity by rewriting $\T(P,Q)$ with random couplings $\pi$ rather than deterministic functions $f$. Let us denote by $\Pi(P,Q) \subset \mathcal{P}(\R^d \times \R^d)$ the set of couplings with $P$ and $Q$ as, respectively, the first and second marginals, which is a convex set. The Kantorovich  formulation of optimal transport \citep{kantorovich1958space} addresses the following relaxation:
\[
\min_{\pi \in \Pi(P,Q)} \int \norm{x-y}^2 \mathrm{d}\pi(x,y).
\]
Not only does this problem always admit a solution in contrast to Monge's version, but it also remains convex since $\Pi(P,Q)$ is a convex set for any $(P,Q)$ and $\pi \mapsto \int \norm{x-y}^2 \mathrm{d}\pi(x,y)$ is a convex function. All in all, the underlying general idea of Kantorovich's relaxation consists in replacing the \emph{deterministic} coupling $(\operatorname{I}_d, f)_\sharp P$ induced by $f$---which belongs to $\Pi(P,Q)$ if and only if $f_\sharp P = Q$---by \emph{any} coupling $\pi$ in $\Pi(P,Q)$. This substitution of $\T(P,Q)$ by $\Pi(P,Q)$ could be generalized to attain convexity at the model level (not necessarily the parameter level) in any learning problems with a transport-map constraint, even when $p \neq q$.

This raises the question whether such a substitution can similarly render the set $\mathcal{E}(P,Q)$ convex. In the context of equalizing maps, the coupling reformulation is not as natural as for transport maps. To illustrate this, let us rewrite $\mathcal{E}(P,Q)$ as
\[
    \mathcal{E}(P,Q) = \bigcup_{R \in \mathcal{P}(\R^p)} \T(P,R) \cap \T(Q,R).
\]
First, we emphasize that directly exchanging $\T$ by $\Pi$ in the above expression to obtain a coupling reformulation is inappropriate. It would produce an empty set as soon as $P \neq Q$, since in this case $\Pi(P,R) \cap \Pi(Q,R) = \emptyset$ for any $R \in \mathcal{P}(\R^p)$. A more applicable reformulation consists in replacing $\T(P,R) \cap \T(Q,R)$ by $\Pi(P,R) \times \Pi(Q,R)$, leading to the definition of
\[
    \Gamma(P,Q) := \bigcup_{R \in \mathcal{P}(\R^p)} \Pi(P,R) \times \Pi(Q,R).
\]
This generalization of $\mathcal{E}(P,Q)$ considers \emph{pairs} of couplings rather than \emph{single} couplings. It captures the fact that the constraint $f_\sharp P = f_\sharp Q$ involves \emph{two} couplings induced by $f$, namely $(\operatorname{I}_d \times f)_\sharp P$ and $(\operatorname{I}_d, f)_\sharp Q$, which share the same right marginal. Crucially, $\Gamma(P,Q)$ is a nonempty convex set.\footnote{To prove this point, one can simply check that the definition of convexity is satisfied. We recall that convexity is not stable under union in general.} In the fairness setting of \cref{sec:parity}, one could therefore learn a pair of random predictors $(\pi_P,\pi_Q) \in \Gamma(P,Q)$ rather than a single deterministic predictor $f \in \mathcal{E}(P,Q)$ to recover convexity. Then, one would apply $\pi_P$ or $\pi_Q$ (or rather their corresponding random mappings) on realizations of, respectively, $P$ and $Q$, to obtain their output distributions.

\begin{remark}[Not in the published version]
While this \say{randomization} of the deterministic couplings does convexify the sets $\T(P,Q)$ and $\mathcal{E}(P,Q)$ in theory, it may fail to render the associated learning problems convex in practice. To encode couplings into computational objects, people typically rely on \emph{noise outsourcing}, according to which all random couplings can be represented as random mappings. Notably, \cite{korotin2023neural} used this representation to train neural \emph{stochastic} optimal transport plans. The general approach consists in adding an independent source of randomness to the inputs of the model $f$. Let us illustrate in the case of transport maps. Rather than learning $f \in \T(P,Q)$, one designs $f \in \T(P \otimes \nu, Q)$ where $\nu$ is continuous. As such, $f(x,\cdot)_\sharp \nu$ describes the probability kernel conditional to $x$ of a coupling with $P$ and $Q$ as marginals. However, $\T(P \otimes \nu, Q)$ suffers from the same convexity issues as any set of transport maps. This raises the question of whether there exists practical convex representations of random couplings.
\end{remark}

\section{Conclusion}

By diving into the theory of transport maps and equalizing maps, we showed that pushing-forward measures was a nonconvex operation in general. Analyzing popular machine-learning problems in light of this underappreciated characteristic enabled us to provide a structural understanding of their (non)convexity. This will hopefully help practitioners and researchers know when it is in vain to try designing a convex objective and consequently encourage them to rapidly consider a different approach if convexity is required.

Our work also opens further lines of research. From a mathematical perspective, there is still much to discover regarding the shape of push-forward constraints, notably equalizing maps. From an applicative angle, a valuable direction would be to investigate deeper the ideas from \cref{sec:recover} to furnish better guidance on the construction of alternative convex problems.

\appendix

\section{Equalizing maps between finitely supported probability measures}\label{sec:stump} This supplementary section provides insight on $\mathcal{E}(P,Q)$, notably on its convexity, when $P$ and $Q$ are two discrete probability measures with finite supports.

\subsection{Setup}

Formally, let $n,m \geq 1$ be two integers, $\{\alpha_i\}^n_{i=1},\{\beta_j\}^n_{i=1} \subset [0,1]$ be two sets of probability weights, and $\{x_i\}_{i=1}^n, \{y_j\}_{j=1}^m \subset \R^d$ be two sets of values. Then, we define $P$ and $Q$ as $P :=\sum_{i=1}^n \alpha_i \delta_{x_i}$ and $Q :=\sum_{j=1}^m \beta_j \delta_{y_j}$. In a first time, we restrict our analysis to measures with disjoint supports, which means supposing that $\{x_i\}_{i=1}^n \cap \{y_j\}_{j=1}^m = \emptyset$. Later, we explain through \cref{prop:nondisjoint} how to extend our results to measures with nondisjoint supports.

Let us introduce extra notation before proceeding. We denote by $[k]$ the set $\{1,\cdots, k\}$ and by $2^{[k]}$ the set of parts of $[k]$ for any integer $k \geq 1$. We also define the following sets:
\begin{equation*}
    S_{\alpha}=\left\{\sum_{i\in I} \alpha_i,\, I \in 2^{[n]}\right\}, \hspace{1cm} S_{\beta}=\left\{\sum_{i\in J} \beta_j,\, J \in 2^{[m]}\right\}.  
\end{equation*}
They correspond to the reachable values by sums of the weights of, respectively, $P$ and $Q$, and can be seen as the image sets of these discrete measures. They will play a key role in the results below. Finally, we write $S_{\alpha,\beta} := S_\alpha \cap S_\beta$. Note that $\{0,1\} \subseteq S_\alpha, S_\beta \subset [0,1]$.

\subsection{Characterization of convexity}
The following theorem gives a sufficient and necessary condition on $P$ and $Q$ to have the convexity of the set $\mathcal{E}(P,Q)$. 
\begin{theorem}[equalizing maps between finitely supported discrete measures]\label{thm:discreteCharacConv}
Let $n,m \geq 1$ be two integers, $\{x_i\}^n_{i=1},\{y_j\}^m_{j=1} \subset \R^d$ be two sets of two-by-two distinct elements such that $\{x_i\}^n_{i=1} \cap \{y_j\}^m_{j=1} = \emptyset$, and $\{\alpha_i\}^n_{i=1},\{\beta_j\}^m_{j=1} \subset [0,1]$ be probability weights. Define $P :=\sum_{i=1}^n \alpha_i \delta_{x_i}$ and $Q :=\sum_{j=1}^m \beta_j \delta_{y_j}$. Then, the set $\mathcal{E}(P,Q)$ is convex if and only if the three following conditions hold:
\begin{itemize}
    \item[(i)] for every $\gamma$ in $S_{\alpha, \beta}$, there exists a unique couple $(I_\gamma, J_\gamma)\in 2^{[n]} \times 2^{[m]}$ such that 
    \[
    \sum_{i \in I_\gamma} \alpha_i =\sum_{j \in J_\gamma} \beta_j =\gamma;
    \]
    \item[(ii)] the sets $\{I_\gamma\}_{\gamma \in S_{\alpha,\beta}}$ and $\{J_\gamma\}_{\gamma \in S_{\alpha,\beta}}$ defined by (i) are $\sigma$-algebras of, respectively, $[n]$ and $[m]$;
    \item[(iii)] for every $\gamma,\gamma' \in S_{\alpha,\beta}$, the index sets $\eta_\alpha(\gamma,\gamma'), \eta_\beta(\gamma,\gamma') \in S_{\alpha,\beta}$ defined by (i) and (ii) such as $I_{\eta_\alpha(\gamma,\gamma')} = I_\gamma \cap I_{\gamma'}=$ and $J_{\eta_\beta(\gamma,\gamma')} = J_\gamma \cap J_{\gamma'}$ satisfy
    \[
    \eta_\alpha(\gamma,\gamma')=\eta_\beta(\gamma,\gamma').
    \]
\end{itemize}
\end{theorem}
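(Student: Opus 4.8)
The plan is to characterize, for a measurable $f \in \G$, what it means for $f$ to belong to $\mathcal{E}(P,Q)$ in purely combinatorial terms, and then test convexity on convex combinations of two such maps. Since $P$ and $Q$ are finitely supported with disjoint supports $\{x_i\} \cup \{y_j\}$, a function $f$ only matters through its values $f(x_i) =: a_i$ and $f(y_j) =: b_j$; the condition $f_\sharp P = f_\sharp Q$ says that the measure $\sum_i \alpha_i \delta_{a_i}$ equals $\sum_j \beta_j \delta_{b_j}$. Equivalently, there is a common finite set of output values $\{z_1,\dots,z_K\} \subseteq \R^p$, and a partition of $[n]$ into blocks $I_1,\dots,I_K$ (the preimages under $i \mapsto a_i$) and of $[m]$ into blocks $J_1,\dots,J_K$ such that $\sum_{i \in I_k}\alpha_i = \sum_{j \in J_k}\beta_j$ for each $k$. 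So $f \in \mathcal{E}(P,Q)$ is encoded by a pair of partitions of $[n]$ and $[m]$ with matching block-masses, together with an injective labelling of blocks by points of $\R^p$. First I would make this correspondence precise, noting that the "free" labelling by arbitrary distinct points of $\R^p$ is exactly what creates the obstruction to convexity: midpoints of distinct labels are new points.

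Next I would analyze when the midpoint $\tfrac12 f + \tfrac12 g$ of two equalizing maps $f,g$ (with block data $(I_k),(J_k)$ and $(I'_\ell),(J'_\ell)$, labelled by $(z_k)$ and $(z'_\ell)$) is again equalizing. The map $h := \tfrac12 f + \tfrac12 g$ takes value $\tfrac12(z_k + z'_\ell)$ on $x_i$ whenever $i \in I_k \cap I'_\ell$, and $\tfrac12(z_k+z'_\ell)$ on $y_j$ whenever $j \in J_k \cap J'_\ell$; if the $z$'s and $z'$'s are chosen in "general position" (e.g. affinely independent), these midpoints $\tfrac12(z_k+z'_\ell)$ are two-by-two distinct across pairs $(k,\ell)$. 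Hence $h \in \mathcal{E}(P,Q)$ forces $\sum_{i \in I_k \cap I'_\ell}\alpha_i = \sum_{j \in J_k \cap J'_\ell}\beta_j$ for every $(k,\ell)$ — the two partitions' common refinements must again have matching masses block-by-block, in a way compatible with how the refinement sits inside the original blocks. Working out that this "closure under common refinement with coherent block-masses" must hold for \emph{all} pairs of equalizing maps, and that conversely if it holds then every convex combination stays equalizing, is the technical heart. I would then translate this closure property into conditions (i)–(iii): condition (i) (uniqueness of the index pair realizing a mass $\gamma$) is forced because if two different sets $I \ne \tilde I$ both sum to $\gamma$ one builds two equalizing maps whose "intersection masses" fail to line up; conditions (ii) and (iii) ($\sigma$-algebra structure and the compatibility $\eta_\alpha = \eta_\beta$) encode that intersections of the canonical blocks $I_\gamma$ again carry a well-defined common mass and that the $[n]$-side and $[m]$-side intersections track the same value $\gamma$.

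For the converse direction I would assume (i)–(iii) and show that for any $f,g \in \mathcal{E}(P,Q)$ and $t \in (0,1)$, $(1-t)f + tg \in \mathcal{E}(P,Q)$: using that the block-masses of $f$ and of $g$ all lie in $S_{\alpha,\beta}$, condition (i) identifies each block with a canonical $I_\gamma$ (resp. $J_\gamma$), condition (ii) guarantees the common refinement's blocks are again among the $I_\gamma$'s, and condition (iii) guarantees the $[n]$- and $[m]$-refinements have matching masses, so $((1-t)f + tg)_\sharp P = ((1-t)f + tg)_\sharp Q$ by reading off the (possibly collided, but that only helps) output atoms. The main obstacle I anticipate is bookkeeping: making the "general position" argument for the forward direction airtight (one must be free to relabel output points, which is fine since labels are unconstrained, but one has to ensure the chosen witnesses $f,g$ genuinely lie in $\mathcal{E}(P,Q)$, i.e. their \emph{own} block-masses already match — this is where one must invoke that the masses used are in $S_{\alpha,\beta}$), and organizing the exhaustive case check so that exactly the three stated conditions emerge, neither weaker nor stronger. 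Once the combinatorial dictionary is set up cleanly, each implication is a short argument; assembling them into an iff is then routine.
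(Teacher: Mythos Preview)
Your proposal is correct and follows essentially the same strategy as the paper: encode $f\in\mathcal{E}(P,Q)$ by a pair of partitions of $[n]$ and $[m]$ with matching block-masses, prove necessity of (i)--(iii) by contrapositive via explicit $f,g\in\mathcal{E}(P,Q)$ whose midpoint fails, and prove sufficiency by checking that under (i)--(iii) the common refinement of any two such partitions again has matching block-masses. The one noteworthy simplification in the paper is that its counterexamples for each failing condition use only \emph{two-valued} maps (partitions into two blocks, labelled by two points $z_1,z_2\in\R^p$), so no ``general position'' argument for multi-block labellings is needed; your more general $K$-block framing is valid but slightly heavier than required.
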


The above conditions may seem convoluted. Nevertheless, note that item (i) alone is a strong condition on $P$ and $Q$. It requires that \emph{every} probability attainable by both $P$ and $Q$ corresponds to unique events for, respectively, $P$ and $Q$. Therefore, it provides a powerful criterion to identify settings where $\mathcal{E}(P,Q)$ is not convex. This is precisely the strategy we follow to prove \cref{prop:empirical_equalizing}.
\begin{proof}[\cref{prop:empirical_equalizing}] We address each item separately.

First, let us suppose that $m$ and $n$ are coprime. Notice that if $S_{\alpha,\beta} := S_\alpha \cap S_\beta =\{0,1\}$, then $\mathcal{E}(P,Q)$ is the set of constant functions over $\{x_i\}_{i=1}^n \cup \{y_j\}_{j=1}^m$. Therefore, we aim at showing in this part of the proof that $S_{\alpha, \beta} =\{0,1\}$. By the definitions of $P$ and $Q$ we have 
\[S_\alpha=\{k/n,\, 0\leq k\leq n\}, \hspace{1cm} S_\beta=\{k'/m,\, 0\leq k'\leq m\}.\]
Consequently, for any $\gamma \in S_{\alpha,\beta}$, there exist two integers $0\leq k\leq n$ and $0 \leq k'\leq m$ such that $\gamma= k/n=k'/m$, and hence $k'n= km$. This entails that $n$ divides $km$, and thereby $n$ divides $k$ since $n$ and $m$ are coprime. Finally, recall that $0\leq k\leq n$, leading to $k=n$ or $k=0$, and therefore $\gamma=1$ or $\gamma=0$.

Second, let us suppose that $m$ and $n$ are not coprime: there exist three integers $r \geq 2$ and $n',m' \geq 1$ such that $n=rn'$ and $m=rm'$. For every element $I\in 2^{[n]}$ of size $n'$ and element $J\in 2^{[m]}$ of size $m'$ we have 
\[
\frac{1}{r}=\sum_{i\in I} \alpha_i=\sum_{j\in J} \beta_j.
\]
This contradicts condition (i) of \cref{thm:discreteCharacConv}, as the number of elements $I\in 2^{[n]}$ of size $n'$ is larger than one. Hence, $\mathcal{E}(P,Q)$ is not convex.
\end{proof}

The proof of \cref{thm:discreteCharacConv} is divided in two parts, one for each side of the equivalence. We address the necessary condition by contrapositive; proving the sufficient condition is more straightforward.

\begin{proof}[\cref{thm:discreteCharacConv}]
Let us start with the \textbf{necessary condition ($\Longleftarrow$)}. First, suppose that condition (i) does not hold: there exists $\gamma\in S_{\alpha, \beta}$ such that there are two distinct couples $(I_\gamma,J_\gamma),(I'_\gamma, J'_\gamma) \in 2^{[n]} \times 2^{[m]}$ verifying $\sum_{i\in I_\gamma} \alpha_i=\sum_{j \in J_\gamma} \beta_i= \sum_{i\in I_{\gamma'}} \alpha_i=\sum_{j \in J_{\gamma'}} \beta_i = \gamma$. Without loss of generality, we can assume that $I_\gamma \neq I'_\gamma$, meaning that there exists $i_0 \in I_\gamma$ such that $i_0 \notin I'_\gamma$. Then, for $z_1$ and $z_2$ two distinct elements of $\R^p$, we define the functions $f,g \in \G$ as
\[
f(x)=
    \begin{cases}
    z_1 & \mbox{if }  x \in \{x_i,\, i \in I_\gamma\}\cup \{y_j,\, j \in  J_\gamma\}\\
    z_2 & \mbox{otherwise } 
    \end{cases}
\]
and
\[
g(x)=
    \begin{cases}
    z_1 & \mbox{if }  x \in \{x_i,\, i \in I'_\gamma\}\cup \{y_j,\, j \in  J_\gamma\},\\
    z_2  & \mbox{otherwise }
    \end{cases}.
\]
Since $\gamma \in S_{\alpha,\beta}$, these functions satisfy $
f_\sharp P = f_\sharp Q = g_\sharp P = g_\sharp Q = \gamma \delta_{z_1}+ (1-\gamma)\delta_{z_2}.$ In addition,
\[
\left(\frac{1}{2}f+\frac{1}{2}g\right)_\sharp P = \alpha_{i_0}\delta_{\frac{z_1+z_2}{2}} + \sum_{i \in [n], i\neq i_0 }\alpha_i \delta_{\frac{f(x_i)+g(x_i)}{2}} \neq
\left(\frac{1}{2}f+\frac{1}{2}g\right)_\sharp Q = \gamma\delta_{z_1}+ (1-\gamma)\delta_{z_2}.
\]
This proves that $\mathcal{E}(P,Q)$ is not convex.

Second, suppose that (i) holds but (ii) does not hold. It readily follows from $\sum_{i =1}^n \alpha_i = \sum_{j=1}^m \beta_j = 1$  that for every $\gamma \in S_{\alpha,\beta}$, $I^c_\gamma := [n] \setminus I_\gamma=I_{1-\gamma}$ and $J^c_\gamma := [m] \setminus J_\gamma = J_{1-\gamma}$. This implies that both $\{I_\gamma\}_{\gamma \in S_{\alpha,\beta}}$ and $\{J_\gamma\}_{\gamma \in S_{\alpha,\beta}}$ are stable under complementation and not empty. Therefore, (ii) being false means that $\{I_\gamma\}_{\gamma \in S_{\alpha,\beta}}$ or $\{J_\gamma\}_{\gamma \in S_{\alpha,\beta}}$ is not stable under intersection. Let us find a counterexample for the convexity of $\mathcal{E}(P,Q)$ using this property. Without loss of generality, we can assume that $\{I_\gamma\}_{\gamma \in S_{\alpha, \beta}}$ is not stable under intersection: there exist $\gamma_1,\gamma_2 \in S_{\alpha,\beta}$ such that $I_{\gamma_1}\cap I_{\gamma_2} \notin \{I_\gamma\}_{\gamma \in S_{\alpha, \beta}}$. Then, for $z_1$ and $z_2$ two distinct elements of $\R^p$, we define the functions $f,g \in \G$ as
\[
f(x)=
    \begin{cases}
    z_1 & \mbox{if }  x \in \{x_i,\, i \in I_{\gamma_1}\}\cup \{y_j,\, j \in  J_{\gamma_1}\}\\
    z_2 & \mbox{otherwise } 
    \end{cases}
\]
and
\[
g(x)=
    \begin{cases}
    z_1 & \mbox{if }  x \in \{x_i,\, i \in I_{\gamma_2}\}\cup \{y_j,\, j \in  J_{\gamma_2}\}\\
    z_2  & \mbox{otherwise }
    \end{cases}.
\]
As before, it follows from $\gamma_1,\gamma_2 \in S_{\alpha,\beta}$ that $f_\sharp P= f_\sharp Q= \gamma_1 \delta_{z_1} + (1-\gamma_1) \delta_{z_2}$ and $g_\sharp P= g_\sharp Q = \gamma_2 \delta_{z_1} + (1-\gamma_2) \delta_{z_2}$. In addition,
\begin{multline*}
\left(\frac{1}{2}f+\frac{1}{2}g\right)_\sharp P =
\left(\sum_{i\in I_{\gamma_1}\cap I_{\gamma_2}} \alpha_i\right) \delta_{z_1}\\+ \left(\sum_{i\in I^c_{\gamma_1} \cap I^c_{\gamma_2}} \alpha_i \right) \delta_{z_2} + \left(\sum_{i\in \left(I_{\gamma_1}\cup I_{\gamma_2} \right) \setminus \left( I_{\gamma_1}\cap I_{\gamma_2} \right)} \alpha_i \right)\delta_{\frac{z_1+z_2}{2}}
\end{multline*}
and
\begin{multline*}
\left(\frac{1}{2}f+\frac{1}{2}g\right)_\sharp Q =
\left(\sum_{j \in J_{\gamma_1}\cap J_{\gamma_2}} \beta_j\right) \delta_{z_1}\\+ \left(\sum_{j\in J^c_{\gamma_1} \cap J^c_{\gamma_2}} \beta_j \right) \delta_{z_2} + \left(\sum_{j \in \left( J_{\gamma_1}\cup J_{\gamma_2}\right)\setminus \left(J_{\gamma_1} \cap J_{\gamma_2} \right)} \beta_j \right)\delta_{\frac{z_1+z_2}{2}}.
\end{multline*}
Critically, $I_{\gamma_1} \cap I_{\gamma_2} \notin \{I_\gamma\}_{\gamma \in S_{\alpha, \beta}}$ entails that $\sum_{i\in I_{\gamma_1}\cap I_{\gamma_2}} \alpha_i \neq \sum_{j\in J_{\gamma_1}\cap J_{\gamma_2}} \beta_j$. Therefore, $\left(\frac{1}{2}f+\frac{1}{2}g\right)_\sharp P \neq \left(\frac{1}{2}f+\frac{1}{2}g\right)_\sharp Q$, meaning that $\mathcal{E}(P,Q)$ is not convex. 

Finally, suppose that (i) and (ii) hold and that (iii) does not hold: there exist $\gamma_1,\gamma_2 \in S_{\alpha,\beta}$ such that $\eta_\alpha(\gamma_1,\gamma_2)\neq \eta_\beta(\gamma_1,\gamma_2)$. Let us define $f,g \in \G$ as before. They still verify $f_\sharp P = f_\sharp Q$ and $g_\sharp P = g_\sharp Q$. In addition,
\begin{multline*}
\left(\frac{1}{2}f+\frac{1}{2}g\right)_\sharp P =
\eta_\alpha(\gamma_1,\gamma_2) \delta_{z_1}\\+ \eta_\alpha(1-\gamma_1,1-\gamma_2) \delta_{z_2} + \left( \eta_\alpha(\gamma_1,1-\gamma_2) + \eta_\alpha(1-\gamma_1,\gamma_2) \right) \delta_{\frac{z_1+z_2}{2}},    
\end{multline*}
and
\begin{multline*}
\left(\frac{1}{2}f+\frac{1}{2}g\right)_\sharp Q =
\eta_\beta(\gamma_1,\gamma_2) \delta_{z_1}\\+ \eta_\beta(1-\gamma_1,1-\gamma_2) \delta_{z_2} + \left(\eta_\beta(\gamma_1,1-\gamma_2) + \eta_\beta(1-\gamma_1,\gamma_2)\right) \delta_{\frac{z_1+z_2}{2}}.
\end{multline*}
By assumption, $\eta_\alpha(\gamma_1,\gamma_2)\neq \eta_\beta(\gamma_1,\gamma_2)$. Hence $\left(\frac{1}{2}f+\frac{1}{2}g\right)_\sharp P \neq \left(\frac{1}{2}f+\frac{1}{2}g\right)_\sharp Q$, meaning that $\mathcal{E}(P,Q)$ is not convex.

We now turn to the \textbf{sufficient condition ($\Longrightarrow$)}. Consider that conditions (i), (ii), and (iii) hold. By definition, for any $f,g \in \mathcal{E}(P,Q)$ there exist two integers $K,L \geq 1$, two sets of probability weights $\{\gamma_k\}^K_{k=1},\{\gamma'_l\}^L_{l=1}$, and two sets of values $\{f_k\}^K_{k=1},\{g_l\}^L_{l=1} \subseteq \R^p$ such that 
\[
f_\sharp P = f_\sharp Q = \sum_{k=1}^K \gamma_k \delta_{f_k}  
\ \text{and}\
g_\sharp P = g_\sharp Q = \sum_{l=1}^L \gamma'_l \delta_{g_l}.
\]
Then, according to (i) and (ii)
\[
[n]=\bigcup_{k=1}^K I_{\gamma_k} = \bigcup_{l=1}^L I_{\gamma'_l}= \bigcup_{k=1}^K \bigcup_{l=1}^L I_{\eta_\alpha(\gamma_k,\gamma'_l)},
\]
where $I_{\eta_\alpha(\gamma_k,\gamma'_l)}=I_{\gamma_k}\cap I_{\gamma'_l} $. Condition (i) gives $I_{\gamma_k}=\{i \in [n], f(x_i)=f_k\}$ and $I_{\gamma'_l}=\{i \in [n], f(x_i)=g_l\}$. Similarly,
\[
[m]=\bigcup_{k=1}^K J_{\gamma_k} = \bigcup_{l=1}^L J_{\gamma'_l}= \bigcup_{k=1}^K\bigcup_{l=1}^L J_{\eta_\beta(\gamma_k,\gamma'_l)},
\]
with $J_{\gamma_k}=\{j \in [m], f(y_j)=f_k\}$ and $J_{\gamma'_l}=\{j \in [m], f(y_j)=g_l\}$ again from condition (i). Then, for all $t \in [0,1]$ 
\begin{multline*}
\left(t f+(1-t)g\right)_\sharp P = \sum_{k=1}^K\sum_{l=1}^L \eta_\alpha(\gamma_k,\gamma'_l)\delta_{tf_k+(1-t)g_l}= \sum_{k=1}^K\sum_{l=1}^L \eta_\beta(\gamma_k,\gamma'_l)\delta_{tf_k+(1-t)g_l}\\= \left(t f+(1-t)g\right)_\sharp Q,
\end{multline*}
due to condition (iii). This conclude the proof.

\end{proof}

\subsection{When the probability measures have nondisjoint supports}

\Cref{thm:discreteCharacConv}, and thereby \cref{prop:empirical_equalizing}, hold for $P$ and $Q$ with disjoint supports. Nevertheless, one can deduce conditions for the convexity of $\mathcal{E}(P,Q)$ in more general cases by subtracting the common mass of $P$ and $Q$. More formally, for $P:=\sum_{i=1}^n \alpha_i\delta_{x_i}$ and $Q:=\sum_{j=1}^m\beta_j\delta_{y_j}$ two finitely supported elements of $\mathcal{P}(\R^d)$, we define their minimum as
\[
\min(P,Q) := \sum^n_{i=1} \sum^m_{j=1} \min(\alpha_i,\beta_j) \mathbf{1}_{\{x_i = y_j\}} \delta_{x_i},
\]
which is a finitely supported element of $\M(\R^d)$. Note that if $P$ and $Q$ have disjoint supports, then $\min(P,Q)$ is the null measure. Crucially, the following result hold.

\begin{proposition}[equalizing maps between finitely supported measures with nondisjoint supports]\label{prop:nondisjoint}
Let $n,m \geq 1$ be two integers, $\{x_i\}^n_{i=1},\{y_j\}^m_{j=1} \subset \R^d$ be two sets of two-by-two distinct elements, and $\{\alpha_i\}^n_{i=1},\{\beta_j\}^m_{j=1} \subset [0,1]$ be probability weights. Define $P:=\sum_{i=1}^n \alpha_i\delta_{x_i}$ and $Q:=\sum_{j=1}^m\beta_j\delta_{y_j}$. Then, $P-\min(P,Q)$ and $Q-\min(P,Q)$ have disjoint supports, share the same total mass, and satisfy
\[
\mathcal{E}(P,Q)=\mathcal{E}(P-\min(P,Q),Q-\min(P,Q)).
\]
\end{proposition}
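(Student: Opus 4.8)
The plan is to prove the three claims in order: the disjoint-support property, the equality of total masses, and then the crucial identity on equalizing maps. Write $R := \min(P,Q) = \sum_{i,j} \min(\alpha_i,\beta_j)\mathbf{1}_{\{x_i=y_j\}}\delta_{x_i} \in \M^+(\R^d)$. Since the $\{x_i\}$ are two-by-two distinct and likewise the $\{y_j\}$, for each point $x_i$ there is at most one $y_j$ with $x_i = y_j$, so the double sum collapses to a single well-defined coefficient at each common atom. Thus $P - R = \sum_i (\alpha_i - r_i)\delta_{x_i}$ where $r_i = \beta_{j(i)}$ if $x_i$ coincides with some $y_{j(i)}$ and $r_i = 0$ otherwise; its support is $\{x_i : \alpha_i > r_i\}$, which (since $r_i \le \alpha_i$ wherever defined) consists only of points $x_i$ with either no matching $y_j$ or with $\alpha_i > \beta_{j(i)}$. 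Symmetrically $Q - R$ is supported on points $y_j$ with no matching $x_i$ or with $\beta_j > \alpha_{i(j)}$. A shared atom $x_i = y_j$ cannot lie in both supports since one cannot simultaneously have $\alpha_i > \beta_j$ and $\beta_j > \alpha_i$; hence the supports are disjoint. For the total mass, use linearity: $(P-R)(\R^d) = 1 - R(\R^d) = (Q-R)(\R^d)$, so both equal $\gamma := 1 - R(\R^d) \ge 0$.

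For the main identity, the cleanest route is to establish, for every $f \in \G$, the decomposition $f_\sharp P = f_\sharp R + f_\sharp (P-R)$ and $f_\sharp Q = f_\sharp R + f_\sharp(Q-R)$, which is immediate from linearity of the push-forward (Proposition on basic push-forward calculus, item (i)). From this, $f_\sharp P = f_\sharp Q$ is equivalent to $f_\sharp(P-R) = f_\sharp(Q-R)$ by cancelling the common term $f_\sharp R$; this cancellation is legitimate because all three measures $f_\sharp R$, $f_\sharp(P-R)$, $f_\sharp(Q-R)$ are finite nonnegative measures on $\R^p$ (images of finite measures), so addition is cancellative. Therefore $f \in \mathcal{E}(P,Q) \iff f \in \mathcal{E}(P-R, Q-R)$, which is exactly the asserted set equality.

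One subtlety to address carefully: $P - R$ and $Q - R$ are genuinely nonnegative measures (so that the earlier theory, and the phrase ``share the same total mass'' with a nonnegative value, make sense). This requires $r_i \le \alpha_i$ for every $i$ and $r_j \le \beta_j$ for every $j$, which holds because $\min(\alpha_i, \beta_j) \le \alpha_i$ and $\le \beta_j$; I would state this explicitly. I would also note the degenerate edge case where $P = Q$: then $R = P$, both differences are the null measure, and $\mathcal{E}$ of two null measures is all of $\G$ — consistent with the basic properties proposition item (ii) — so the statement still holds trivially.

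The step I expect to require the most care is not conceptually deep but bookkeeping-heavy: matching up indices between the $\{x_i\}$ and $\{y_j\}$ when supports overlap, i.e.\ making the informal ``subtract the common mass'' rigorous by writing $R$ with a single unambiguous coefficient at each atom and verifying that $P - R$, $Q - R$ have the claimed atomic representations and disjoint supports. Everything else — linearity, cancellation, the mass computation — is routine once that is set up. There is no real obstacle; the proposition is essentially a structural observation, and the proof is short.
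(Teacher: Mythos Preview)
Your proposal is correct and follows essentially the same approach as the paper: both arguments establish disjoint supports by checking that a shared atom cannot survive in both $P-R$ and $Q-R$, compute the common total mass by subtracting $R(\R^d)$ from $1$, and obtain the equality $\mathcal{E}(P,Q)=\mathcal{E}(P-R,Q-R)$ via linearity of the push-forward applied to the decompositions $P=R+(P-R)$ and $Q=R+(Q-R)$. Your cancellation argument and the paper's two-sided inclusion are equivalent.

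One small slip worth fixing: you write $r_i=\beta_{j(i)}$ when $x_i$ matches some $y_{j(i)}$, but you should have $r_i=\min(\alpha_i,\beta_{j(i)})$; your own later remarks (``since $r_i\le\alpha_i$'' and the final paragraph invoking $\min(\alpha_i,\beta_j)\le\alpha_i$) make clear you were thinking of the correct quantity, so this is a typo rather than a gap.
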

This proposition enables one to determine conditions for the convexity of $\mathcal{E}(P,Q)$ by applying, for instance, the previous results to $P - \min(P,Q)$ and $Q - \min(P,Q)$.
\begin{proof}[\cref{prop:nondisjoint}]
The proof is trivial when $P$ and $Q$ already have disjoint supports. Let us assume that the supports of $P$ and $Q$ are not disjoint. Then, one can define a nonempty set $\{z_k\}_{k=1}^l := \{x_i\}_{i=1}^n \cap \{y_j\}_{j=1}^m$, where $l \geq 1$ is an integer. Without loss of generality, we reorder the elements in $\{x_i\}_{i=1}^n$ and $\{y_j\}_{j=1}^m$ so that $z_k=x_k=y_k$ for every $k \in [l]$. This enables us to express $\min(P,Q)$ as
\[
\min(P,Q) = \sum_{k=1}^l \min(\alpha_k,\beta_k) \delta_{z_k}.
\]
In the rest of the proof, we write $P' := P - \min(P,Q)$ and $Q' = Q - \min(P,Q)$.

First, we show that $P'$ and $Q'$ have disjoint supports. Note that 
\[
P'=\sum_{k=1}^l (\alpha_{k}-\min(\alpha_k,\beta_k))\delta_{z_k} + \sum_{i=l+1}^n \alpha_i\delta_{x_i} \ \text{and}\
Q'=\sum_{k=1}^l (\beta_{k}-\min(\alpha_k,\beta_k))\delta_{z_k} + \sum_{j=l+1}^m \beta_j\delta_{y_j}.
\]
By hypothesis, $\{x_i\}_{i=l+1}^n\cap \{y_j\}_{j=1}^m=\emptyset$. Additionally, for every $k \in [l]$, if $\alpha_k-\min(\alpha_k,\beta_k) > 0$, then $\alpha_k>\beta_k$, and hence $\beta_k-\min(\alpha_k,\beta_k)=0$. These two remarks ensure that $P'$ and $Q'$ have disjoint supports. They also clearly share the same total mass since $P$ and $Q$ both sum to one, and the subtrahend is the same, namely, $\min(P,Q)$.

Second, we prove that $\mathcal{E}(P',Q') = \mathcal{E}(P,Q)$. Every function $f$ in $\mathcal{E}(P',Q')$ satisfies $f_\sharp P' = f_\sharp Q'$ by definition. Therefore, it follows from $P = P' + \min(P,Q)$, $Q = Q' + \min(P,Q)$ and the linearity of the push-forward operation that
\[
f_\sharp P = f_\sharp P'+f_\sharp \min(P,Q)  = f_\sharp Q'+ f_\sharp \min(P,Q) = f_\sharp Q.
\]
This means that $\mathcal{E}(P',Q') \subseteq \mathcal{E}(P,Q)$. Conversely, for every $f$ in $\mathcal{E}(P,Q)$, the linearity of the push-forward yields
\[
f_\sharp P' = f_\sharp P - f_\sharp \min(P,Q) = f_\sharp Q - f_\sharp \min(P,Q) = f_\sharp \min(P,Q).
\]
Consequently, $\mathcal{E}(P,Q)\subseteq\mathcal{E}(P',Q')$. This concludes the proof.
\end{proof}

\begin{remark}[General measures with nondisjoint supports] \Cref{prop:equalizer,prop:nondisjoint} handle the case of equalizing maps between measures with nondisjoint supports in, respectively, the Lebesgue absolutely continuous case and the finitely supported case. They rely on the same idea: subtracting the common mass $\min(P,Q)$ to obtain $P':= P - \min(P,Q)$ and $Q' = Q - \min(P,Q)$ with disjoint supports, then noting that $\mathcal{E}(P,Q)=\mathcal{E}(P',Q')$. In \cref{prop:equalizer}, the minimum is defined via a truncated density function; in \cref{prop:nondisjoint}, it is defined via truncated probability weights. Interestingly, such a notion of minimum between measures can be defined for any probability measures using the Jordan decomposition \citep[Corollary 2.9]{kallenberg2021foundations}, which enables one to use the same subtraction argument in the general case.
\end{remark}

\section*{Acknowledgments}
We would like to thank Armand Foucault, Iain Henderson, and Graeme Baker for helpful discussions. We also thank the anonymous reviewers for their valuable feedback and helpful suggestions.

\bibliographystyle{abbrvnat}
\bibliography{references}

\end{document}